\begin{document}

\title{Exponential Weights on the Hypercube in Polynomial Time}

\author{\name Sudeep Raja Putta \email sp3794@columbia.edu \\
     \addr Columbia University 
     \AND  
      \name Abhishek Shetty \email avs88@cornell.edu \\
     \addr Cornell University}
       

\editor{}

\maketitle

\begin{abstract}
We study a general online linear optimization problem(OLO). At each round, a subset of objects from a fixed universe of $n$ objects is chosen, and a linear cost associated with the chosen subset is incurred. To measure the performance of our algorithms, we use the notion of regret which is the difference between the total cost incurred over all iterations and the cost of the best fixed subset in hindsight. We consider Full Information and Bandit feedback for this problem. This problem is equivalent to OLO on the $\{0,1\}^n$ hypercube. The Exp2 algorithm and its bandit variant are commonly used strategies for this problem. It was previously unknown if it is possible to run Exp2 on the hypercube in polynomial time.

In this paper, we present a polynomial time algorithm called PolyExp for OLO on the hypercube. We show that our algorithm is equivalent Exp2 on $\{0,1\}^n$, Online Mirror Descent(OMD), Follow The Regularized Leader(FTRL) and Follow The Perturbed Leader(FTPL) algorithms. We show PolyExp achieves expected regret bound that is a factor of $\sqrt{n}$ better than Exp2 in the full information setting under $L_\infty$ adversarial losses. Because of the equivalence of these algorithms, this implies an improvement on Exp2's regret bound in full information. We also show matching regret lower bounds. Finally, we show how to use PolyExp on the $\{-1,+1\}^n$ hypercube, solving an open problem in Bubeck et al (COLT 2012).
\end{abstract}

\section{Introduction}
Consider the following abstract game which proceeds as a sequence of $T$ rounds. In each round $t$, a player has to choose a subset $S_t$ from a universe $U$ of $n$ objects. Without loss of generality, assume $U=\{1,2,..,n\}=[n]$. Each object $i \in U$ has an associated loss $c_{t,i}$, which is unknown to the player and may be chosen by an adversary. On choosing $S_t$, the player incurs the cost $c_t(S_t) = \sum_{i \in S_t} c_{t,i}$. In addition the player receives some feedback about the costs of this round. The goal of the player is to choose the subsets such that the total cost incurred over a period of rounds is close to to the total cost of the best subset in hindsight. This difference in costs is called the \textit{regret} of the player. Formally, regret is defined as:
$$\mathcal{R}_T = \sum_{t=1}^T c_t(S_t) - \min_{S \subseteq U} \sum_{t=1}^T c_t(S)$$
We can re-formulate the problem as follows. The $2^n$ subsets of $U$ can be mapped to the vertices of the $\{0,1\}^n$ hypercube. The vertex corresponding to the set $S$ is represented by its characteristic vector $X(S) = \sum_{i=1}^n 1\{i \in S\} e_i$. From now on, we will work with the hypercube instead of sets and use losses $l_{t,i}$ instead of costs. In each round, the player chooses $X_t \in \{0,1\}^n$. The loss vector $l_{t}$ is be chosen by an adversary and is unknown to the player. The loss of choosing $X_t$ is $X_t^\top l_t$. The player receives some feedback about the loss vector. The goal is to minimize regret, which is now defined as:
$$\mathcal{R}_T = \sum_{t=1}^T X_t^\top l_t - \min_{X \in \{0,1\}^n}\sum_{t=1}^T X^\top l_t$$
This is the \textit{Online Linear Optimization(OLO)} problem on the hypercube. As the loss vector $l_t$ can be set by an adversary, the player has to use some randomization in its decision process in order to avoid being foiled by the adversary. At each round $t=1,2,\dots,T$, the player chooses an action $X_t$ from the decision set $\{0,1\}^n$, using some internal randomization. Simultaneously, the adversary chooses a loss vector $l_t$, without access to the internal randomization of the player. Since the player's strategy is randomized and the adversary could be adaptive, we consider the expected regret of the player as a measure of the player's performance. Here the expectation is with respect to the internal randomization of the player and the adversary's randomization. 

We consider two kinds of feedback for the player.
\begin{enumerate}
\item \textit{Full Information setting:} At the end of each round $t$, the player observes the loss vector $l_t$.
\item \textit{Bandit setting:} At the end of each round $t$, the player only observes the scalar loss incurred $X_t^\top l_t$.
\end{enumerate}
In order to make make quantifiable statements about the regret of the player, we need to restrict the loss vectors the adversary may choose. Here we assume that $\norm{l_t}_{\infty} \leq 1$ for all $t$, also known as the $L_\infty$ assumption.

There are three major strategies for online optimization, which can be tailored to the problem structure and type of feedback. Although, these can be shown to be equivalent to each other in some form, not all of them may be efficiently implementable. These strategies are:
\begin{enumerate}
\item Exponential Weights (EW)\citep{freund1997decision,littlestone1994weighted} 
\item Follow The Leader (FTL)\citep{kalai2005efficient}
\item Online Mirror Descent (OMD) \citep{nemirovsky1983problem}.
\end{enumerate}

For problems of this nature, a commonly used EW type algorithm is Exp2 \citep{audibert2011minimax, audibert2013regret, bubeck2012towards}. For the specific problem of Online Linear Optimization on the hypercube, it was previously unknown if the Exp2 algorithm can be efficiently implemented \citep{bubeck2012towards}. So, previous works have resorted to using OMD algorithms for problems of this kind. The main reason for this is that Exp2 explicitly maintains a probability distribution on the decision set. In our case, the size of the decision set is $2^n$. So a straightforward implementation of Exp2 would need exponential time and space. 

\subsection{Our Contributions}
We use the following key observation: In the case of linear losses the probability distribution of Exp2 can be factorized as a product of $n$ Bernoulli distributions. Using this fact, we design an efficient polynomial time algorithm called \textit{PolyExp} for sampling sampling from and updating these distributions.

We show that PolyExp is equivalent to Exp2. In addition, we show that PolyExp is equivalent to OMD with entropic regularization and Bernoulli sampling. This allows us to analyze PolyExp's using powerful analysis techniques of OMD. We also show that PolyExp is equivalent to a Follow The Regularized Leader(FTRL) and a Follow The Perturbed Leader(FTPL) algorithms.

This kind of equivalence is rare. To the best of our knowledge, the only other scenario where this kind of equivalence holds is on the probability simplex for the so called experts problem.

In our paper, we focus on the $L_\infty$ assumption. In the full information setting, directly analyzing Exp2 gives a regret bound of $O(n^{3/2}\sqrt{T})$. Using the equivalence to OMD, we show that PolyExp's regret bound is $n\sqrt{T}$.  In the bandit setting, PolyExp's regret cannot be bounded through our analysis, when using the one point linear estimator proposed in \cite{dani2008price}. However, since we show that Exp2 and PolyExp are equivalent, they must have the same regret bound. These results are summarized by the table below.
\begin{center}
 \begin{tabular}{|c|c|c|}
 \hline
 \multicolumn{3}{|c|}{$L_\infty$}\\
 \hline
  & Full Information & Bandit \\ 
 \hline
 Exp2 & $O(n^{3/2} \sqrt{T})$ & $O(n^{2} \sqrt{T})$  \\ 
 \hline
 PolyExp &  $O(n \sqrt{T})$ & $O(n^{2} \sqrt{T})$  \\
 \hline
 Lowerbound &  $\Omega(n \sqrt{T})$ & $\Omega(n^{2} \sqrt{T})$  \\
 \hline
\end{tabular}
\end{center}
\begin{proposition}For the Online Linear Optimization problem on the $\{0,1\}^n$  Hypercube, Exp2, OMD, FTRL, FTPL and PolyExp are equivalent. Moreover, under $L_\infty$ adversarial losses, these algorithms have the following regret:
\begin{enumerate}
\item Full Information: $O(n\sqrt{T})$
\item Bandit: $O(n^{2} \sqrt{T})$.
\end{enumerate}
\end{proposition}
We also have the following lower bounds.
\begin{proposition}For the Online Linear Optimization problem on the $\{0,1\}^n$  Hypercube with $L_\infty$ adversarial losses, the regret of any algorithm is at least:
\begin{enumerate}
\item Full Information: $\Omega\left(n\sqrt{T}  \right)$
\item Bandit: $\Omega(n^{2} \sqrt{T})$.
\end{enumerate}
\end{proposition}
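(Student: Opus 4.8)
The plan is to prove the two lower bounds separately: the full-information bound is essentially classical and uses the product structure of the cube, while the bandit bound needs a stochastic hard instance together with an information-theoretic argument whose core is a ``noise dilution'' estimate.

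\textbf{Full information.} First I would exploit that $\{0,1\}^n=\{0,1\}\times\cdots\times\{0,1\}$ and the loss is linear, so the problem decomposes into $n$ independent two-action problems: in coordinate $i$ the player picks $X_{t,i}\in\{0,1\}$, pays $X_{t,i}l_{t,i}$, and the best fixed choice pays $\min\big(0,\sum_t l_{t,i}\big)$, giving $R_T=\sum_{i=1}^n\big(\sum_t X_{t,i}l_{t,i}-\min(0,\sum_t l_{t,i})\big)$. I would take the $l_{t,i}$ to be i.i.d.\ Rademacher (which respects $\norm{l_t}_\infty\le 1$). Since $X_{t,i}$ is measurable with respect to the past and $l_{t,i}$ is independent of it, $\mathbb{E}[\sum_t X_{t,i}l_{t,i}]=0$, whereas $\mathbb{E}\big[-\min(0,\sum_t l_{t,i})\big]=\tfrac12\mathbb{E}\big|\sum_t l_{t,i}\big|=\Theta(\sqrt T)$ by Khintchine's inequality (or the central limit theorem). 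Summing over the $n$ coordinates and passing from the random loss sequence to the worst case yields $\inf_{\mathrm{alg}}\sup_{\mathrm{adv}}\mathbb{E}[R_T]=\Omega(n\sqrt T)$.

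\textbf{Bandit.} For the bandit bound I would use a stochastic instance in the spirit of combinatorial-bandit lower bounds. Draw a hidden sign vector $\chi\in\{-1,+1\}^n$ uniformly at random, fix a gap $\epsilon\in(0,\tfrac12]$, and let the coordinates of $l_t$ be independent $\pm1$ variables with $\mathbb{E}[l_{t,i}]=-\epsilon\chi_i$; then the optimal vertex is $X^*_i=1\{\chi_i=1\}$. A direct computation (using $l_{t,i}\perp X_{t,i}$ given $\chi$, and that best-in-hindsight dominates $X^*$) gives $\mathbb{E}[R_T\mid\chi]\ge\epsilon\sum_{t=1}^T\sum_{i=1}^n\mathbb{P}_\chi(X_{t,i}\neq X^*_i)$, so it suffices to lower bound, for each round $t$, the average over $\chi$ of $\sum_i\mathbb{P}_\chi(X_{t,i}\neq X^*_i)$. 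This is the setting of Assouad's lemma: comparing the world $\chi$ with the world $\chi^{(i)}$ obtained by flipping coordinate $i$, one obtains $2^{-n}\sum_\chi\sum_i\mathbb{P}_\chi(X_{t,i}\neq X^*_i)\ge\tfrac n2\big(1-\sqrt{\tfrac12\,\overline{\mathrm{KL}}}\big)$, where $\overline{\mathrm{KL}}$ is the average over $i$ and $\chi$ of the Kullback--Leibler divergence between the laws of the observed scalars $Y_1,\dots,Y_T$ under $\chi$ and under $\chi^{(i)}$.

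The crux is bounding this divergence. By the chain rule and convexity of KL, the round-$t$ contribution for coordinate $i$ is at most $\mathbb{E}\big[1\{X_{t,i}=1\}\cdot\mathrm{KL}\big(\mathrm{law}(l_{t,i}+Z)\,\|\,\mathrm{law}(l'_{t,i}+Z)\big)\big]$, where $Z=\sum_{j\neq i}X_{t,j}l_{t,j}$ is common to both worlds and $l_{t,i},l'_{t,i}$ are $\pm1$ with means $\mp\epsilon$. The key estimate I would establish is that convolving with this ``noise'' $Z$, whose variance is $\Theta(\norm{X_t}_0-1)$, contracts the divergence to $O\big(\epsilon^2/(1+\norm{X_t}_0)\big)$; summing over the $\norm{X_t}_0$ active coordinates then gives a round-$t$ total of $O(\epsilon^2)$ \emph{regardless of how sparse or dense $X_t$ is}. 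Hence $\sum_i\overline{\mathrm{KL}}_i=O(\epsilon^2 T)$, so $\overline{\mathrm{KL}}=O(\epsilon^2 T/n)$; plugging this into the Assouad bound, choosing $\epsilon=\Theta(\sqrt{n/T})$ to make the bracket a constant, and summing over $t$ gives $\mathbb{E}[R_T]\ge\Omega(\epsilon\,nT)=\Omega(n^{3/2}\sqrt T)$, which then transfers to the worst case.

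\textbf{Main obstacle.} The delicate point is the contraction estimate $\mathrm{KL}\big(\mathrm{law}(l+Z)\|\mathrm{law}(l'+Z)\big)=O\big(\epsilon^2/(1+\mathrm{Var}(Z))\big)$ for bounded, discrete noise $Z$: the data-processing inequality alone only yields the undiluted bound $O(\epsilon^2)$, which would give merely $\Omega(n\sqrt T)$ in the bandit setting and lose exactly the extra $\sqrt n$ that separates bandit from full information. I expect to obtain it either through a characteristic-function / $\chi^2$-divergence computation, or by lightly smoothing the instance (replacing the $\pm1$ coordinates by a bounded distribution whose self-convolutions have controlled densities) so that the Gaussian-type identity $\mathrm{KL}\big(\mathcal{N}(\mu,\sigma^2)\|\mathcal{N}(\mu',\sigma^2)\big)=(\mu-\mu')^2/2\sigma^2$ applies up to constants; such estimates are standard in the combinatorial-bandit lower bound literature. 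The remaining work --- the regret identity, the conditioning on the past in the chain rule, and assembling the per-round bounds via Cauchy--Schwarz --- is routine.
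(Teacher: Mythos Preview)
Your proposal is correct and mirrors the paper's approach. For full information both arguments use i.i.d.\ Rademacher losses and the coordinate decomposition, reducing to $n$ copies of $\tfrac12\,\mathbb{E}\big|\sum_t Y_t\big|=\Theta(\sqrt T)$; for the bandit bound both build a $2^n$-element stochastic family, pass to KL via Pinsker, apply the chain rule, and hinge on the same dilution estimate $\mathrm{KL}=O\big(\epsilon^2/(1+|X_t|)\big)$ --- the paper simply cites it as Lemma~24 of Audibert, Bubeck and Lugosi, which is precisely the ``main obstacle'' you flag. The only differences are cosmetic: the paper indexes its adversaries by $\{0,1\}^n$ (each coordinate either unbiased or $\epsilon$-biased) rather than your symmetric $\{-1,+1\}^n$ construction, and in the final assembly it tracks the empirical play frequencies $N_i=\tfrac1T\sum_t X_{t,i}$ instead of your direct Cauchy--Schwarz on $\sum_i\sqrt{\mathrm{KL}_i}$, but both routes land on the same $\epsilon nT\big(c-\epsilon\sqrt{T/n}\big)$ and the same optimized $\epsilon=\Theta(\sqrt{n/T})$.
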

Finally, in \citep{bubeck2012towards}, the authors state that it is not known if it is possible to sample from the exponential weights distribution in polynomial time for $\{-1,+1\}^n$ hypercube. We show how to use PolyExp on $\{0,1\}^n$ for $\{-1,+1\}^n$. We show that the regret of such an algorithm on $\{-1,+1\}^n$ will be a constant factor away from the regret of the algorithm on $\{0,1\}^n$. Thus, we can use PolyExp to obtain a polynomial time algorithm for $\{-1,+1\}^n$ hypercube.

We present the proofs of equivalence and regret of PolyExp within the main body of the paper. The remaining proofs are deferred to the appendix.

\subsection{Relation to Previous Works}
In previous works on OLO \citep{dani2008price, koolen2010hedging, audibert2011minimax, cesa2012combinatorial, bubeck2012towards, audibert2013regret} the authors consider arbitrary subsets of $\{0,1\}^n$ as their decision set. This is also called as Online Combinatorial optimization. In our work, the decision set is the entire $\{0,1\}^n$ hypercube. Moreover, the assumption on the adversarial losses are different. Most of the previous works use the $L_2$ assumption \citep{bubeck2012towards,dani2008price, cesa2012combinatorial} and some use the $L_\infty$ assumption \citep{koolen2010hedging,audibert2011minimax}.

The Exp2 algorithm has been studied under various names, each with their own modifications and improvements. In its most basic form, it corresponds to the Hedge algorithm from \citep{freund1997decision} for full information. For combinatorial decision sets, it has been studied by \citep{koolen2010hedging} for full information. In the bandit case, several variants of Exp2 exist based on the sampling scheme and linear estimator used. These were studied in \citep{dani2008price, cesa2012combinatorial} and \citep{bubeck2012towards}. It has been proven in \citep{audibert2011minimax} that Exp2 is provably sub optimal for some decision sets and losses.

Follow the Leader kind of algorithms were introduced by \citep{kalai2005efficient} for the full information setting, which can be extended to the bandit settings in some cases.

Mirror descent style of algorithms were introduced in \citep{nemirovsky1983problem}. For online learning, several works \citep{aber2009compe, koolen2010hedging, bubeck2012towards, audibert2013regret} consider OMD style of algorithms. Other algorithms such as Hedge, FTRL and FTPL etc can be shown to be equivalent to OMD with the right regularization function and purturbation distribution. In fact, \citep{srebro2011universality} show that OMD can always achieve a nearly optimal regret guarantee for a general class of online learning problems.

Under the $L_\infty$ assumption, \citep{koolen2010hedging} \citep{audibert2011minimax} and \citep{cohen2017tight}  present lower bounds that match our lower bounds. However, they prove that there exists a subset $S \subset \{0,1\}^n$ and a sequence of losses on $S$ such that the regret is at least some lower bound. So, these results are not directly applicable in our case. So, we derive lower bounds specific for the entire hypercube, showing that there exists a sequence of losses on $\{0,1\}^n$ such that the regret is at least some lower bound. 

We refer the readers to the books by \citep{cesa2006prediction}, \citep{bubeck2012regret}, \citep{shalev2012online}, \citep{hazan2016introduction} and lectures by \citep{rakhlin2009lecture}, \citep{bubeck2011introduction} for a comprehensive survey of online learning algorithms.
\section{Algorithms and Equivalences}
In this section, we describe and analyze the Exp2, OMD with Entropic regularization and Bernoulli Sampling, and PolyExp algorithms and prove their equivalence.
\subsection{Exp2}
\begin{figure}[h]
\noindent\fbox{%
    \parbox{\linewidth}{%
\textbf{Algorithm:} Exp2\\
\textbf{Parameters:} Learning Rate $\eta$\\
Let $w_1(X) = 1$ for all $X \in \{0,1\}^n$. For each round $t=1,2,\dots,T$:
\begin{enumerate}
\item Sample $X_t$ as below. Play $X_t$ and incur the loss $X_t^\top l_t$.
\begin{enumerate}
\item Full Information: $X_t \sim p_t(X) = \frac{w_t(X)}{Z_t}$, where $Z_t=\sum \limits_{Y \in \{0,1\}^n} w_t(Y)$
\item Bandit: $X_t \sim q_t(X) = (1-\gamma)p_t(X) + \gamma \mu(X)$. Here $\mu$ is the exploration distribution.
\end{enumerate}
\item See Feedback and construct $\tilde{l_t}$.
\begin{enumerate}
\item Full Information: $ \tilde{l_t} = l_t$.
\item Bandit: $\tilde{l}_{t} = P_t^{-1}X_t X_t^\top l_t$, where $P_t = \mathbb{E}_{X \sim q_t}[XX^\top]$
\end{enumerate}
\item Update for all $X \in \{0,1\}^n$
\begin{align*}
w_{t+1}(X) = \exp\left(-\eta \sum_{\tau=1}^tX^\top \tilde{l}_\tau \right) \quad \text{ or equivalently }\quad
 w_{t+1}(X) = \exp(-\eta X^\top \tilde{l}_t)w_t(X)
\end{align*}
\end{enumerate}
}%
}
\end{figure}
The loss vector used to update Exp2 must satisfy the condition that $\mathbb{E}_{X_t}[\tilde{l_t}] = l_t$. In the bandit case, the estimator was first proposed by \citep{dani2008price}. Here, $\mu$ is the exploration distribution and $\gamma$ is the mixing coefficient. We use uniform exploration over $\{0,1\}^n$.

Exp2 has several computational drawbacks. First, it uses $2^n$ parameters to maintain the distribution $p_t$. Sampling from this distribution in step 1 and updating it step 3 will require exponential time. For the bandit settings, even computing $\tilde{l_t}$ will require exponential time.
We state the following regret bounds by analyzing Exp2 directly. The proofs are in the appendix. Later, we prove that these can be improved. These regret bounds are under the $L_\infty$ assumption.

\begin{restatable}{theorem}{ExpFReg}  \label{Theorem1}In the full information setting, if $\eta = \sqrt{\frac{\log 2}{nT}}$, Exp2 attains the regret bound:
$$E[\mathcal{R}_T] \leq 2 n^{3/2}\sqrt{T\log 2}$$
\end{restatable}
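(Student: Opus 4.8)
The plan is to treat Exp2 in the full-information setting as exactly the Hedge / exponential-weights algorithm run over the $N=2^n$ ``experts'' indexed by the vertices $X\in\{0,1\}^n$, where the loss of expert $X$ at round $t$ is the scalar $X^\top l_t$. Under the $L_\infty$ assumption this per-expert loss lies in $[-n,n]$, since $|X^\top l_t|\le \norm{X}_1\norm{l_t}_\infty\le n$. The only place the argument will differ from the textbook exponential-weights proof is in accounting for this range instead of the usual $[0,1]$, and the resulting $n^{3/2}$ (rather than $n$) scaling is precisely the price of this crude range bound.

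Concretely, I would introduce the potential $W_t=\sum_{X\in\{0,1\}^n}w_t(X)$, with $W_1=2^n$, and bound $\log(W_{T+1}/W_1)$ from two sides. For the lower bound, since in full information $\tilde{l}_t=l_t$, we have $W_{T+1}\ge w_{T+1}(X^\star)=\exp(-\eta\sum_{t}X^{\star\top}l_t)$ for the minimizer $X^\star\in\{0,1\}^n$, so $\log(W_{T+1}/W_1)\ge -\eta\sum_t X^{\star\top}l_t-n\log 2$. For the upper bound, write $\log(W_{t+1}/W_t)=\log\mathbb{E}_{X\sim p_t}[\exp(-\eta X^\top l_t)]$ and apply the elementary inequality $e^u\le 1+u+u^2$, valid for $|u|\le 1$; taking $u=-\eta X^\top l_t$ this is legitimate once $\eta n\le 1$, i.e. once $T\ge n\log 2$, which the choice $\eta=\sqrt{\log 2/(nT)}$ guarantees in the regime of interest. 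Combined with $\log(1+v)\le v$ this gives $\log(W_{t+1}/W_t)\le -\eta\,\mathbb{E}_{X\sim p_t}[X^\top l_t]+\eta^2\,\mathbb{E}_{X\sim p_t}[(X^\top l_t)^2]\le -\eta\,\mathbb{E}_{X\sim p_t}[X^\top l_t]+\eta^2 n^2$, using $(X^\top l_t)^2\le n^2$ again.

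Telescoping over $t$ and combining the two estimates yields $\sum_{t}\mathbb{E}_{X\sim p_t}[X^\top l_t]-\sum_t X^{\star\top}l_t\le \tfrac{n\log 2}{\eta}+\eta n^2 T$. Because the player samples $X_t\sim p_t$ in the full-information setting and $p_t$ is determined by $l_1,\dots,l_{t-1}$, the left-hand side is exactly $\mathbb{E}[R_T]$ after conditioning on the past at each round and applying the tower rule to handle a possibly adaptive adversary. Substituting $\eta=\sqrt{\log 2/(nT)}$ makes both terms on the right equal to $\sqrt{n^3T\log 2}=n^{3/2}\sqrt{T\log 2}$, for a total of $2n^{3/2}\sqrt{T\log 2}$, as claimed.

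I expect no genuine obstacle here: the argument is routine exponential-weights bookkeeping. The only points needing care are checking $\eta n\le 1$ so that $e^u\le 1+u+u^2$ applies, and cleanly reducing $\mathbb{E}[R_T]$ to $\sum_t\mathbb{E}_{X\sim p_t}[X^\top l_t]-\min_X\sum_t X^\top l_t$ against an adaptive adversary. The looseness that PolyExp will later remove is entirely in the bound $\mathbb{E}_{X\sim p_t}[(X^\top l_t)^2]\le n^2$, which ignores that under this distribution the coordinates of $X$ are independent Bernoullis, so that this second moment is really of order $n$ plus a squared-mean term rather than $n^2$.
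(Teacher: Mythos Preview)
Your proposal is correct and follows essentially the same route as the paper: the paper isolates the potential argument as a lemma (with $Z_t=\sum_X w_t(X)$, the inequality $e^{-x}\le 1-x+x^2$ for $x\ge -1$, then $1+v\le e^v$), bounds $p_t^\top L_t^2\le n^2$, and optimizes $\eta$ exactly as you do, noting the constraint $\eta\le 1/n$. Your added remarks on the adaptive-adversary tower rule and on where the $n^2$ looseness enters are accurate and align with the paper's later discussion in Section~3.
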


\begin{restatable}{theorem}{ExpBanReg}
In the bandit setting, if $\eta = \sqrt{\frac{\log 2}{9n^2T}}$ and $\gamma = 4n^2 \eta$, Exp2 with uniform exploration on $\{0,1\}^n$ attains the regret bound:
$$\mathbb{E}[\mathcal{R}_T] \leq 6 n^{2} \sqrt{T \log 2}$$
\end{restatable}

\subsection{PolyExp}
\begin{figure}[h]
\noindent\fbox{%
    \parbox{\linewidth}{%
\textbf{Algorithm:} PolyExp\\
\textbf{Parameters:} Learning Rate $\eta$\\
Let $x_{i,1} = 1/2$ for all $i \in [n]$. For each round $t=1,2,\dots,T$:
\begin{enumerate}
\item Sample $X_t$ as below. Play $X_t$ and incur the loss $X_t^\top l_t$.
\begin{enumerate}
\item Full information: $X_{i,t} \sim Bernoulli(x_{i,t})$
\item Bandit: With probability $1-\gamma$ sample $X_{i,t} \sim Bernoulli(x_{i,t})$ and with probability $\gamma$ sample $X_t \sim \mu$ 
\end{enumerate}
\item See Feedback and construct $\tilde{l}_t$
\begin{enumerate}
\item Full information: $\tilde{l}_t  = l_t$
\item Bandit: $\tilde{l}_{t} = P_t^{-1}X_t X_t^\top l_t$, where $P_t = (1-\gamma)\Sigma_t + \gamma \mathbb{E}_{X \sim \mu}[XX^\top]$. The matrix $\Sigma_t$ is $\Sigma_t[i,j]  = x_{i,t}x_{j,t}$ if $i\neq j$ and $\Sigma_t[i,i] = x_i$ for all $i,j \in [n]$
\end{enumerate}
\item Update for all $i \in [n]$:
\begin{align*}
x_{i,t+1} &= \frac{1}{1+\exp(\eta \sum_{\tau=1}^t \tilde{l}_{i,\tau})} \text{ or equivalently}\\
x_{i,t+1} &= \frac{x_{i,t}}{x_{i,t} + (1-x_{i,t})\exp(\eta \tilde{l}_{i,t})}
\end{align*}
\end{enumerate}
    }%
}
\end{figure}

To get a polynomial time algorithm, we replace the sampling and update steps with polynomial time operations. PolyExp uses $n$ parameters represented by the vector $x_t$. Each element of $x_t$ corresponds to the mean of a Bernoulli distribution. It uses the product of these Bernoulli distributions to sample $X_t$ and uses the update equation mentioned in step 3 to obtain $x_{t+1}$.

In the Bandit setting, we can sample $X_t$ by sampling from $\prod_{i=1}^n Bernoulli(x_{t,i})$ with probability $1-\gamma$ and sampling from $\mu$ with probability $\gamma$. As we use the uniform distribution over $\{0,1\}^n$ for exploration, this is equivalent to sampling from $\prod_{i=1}^n Bernoulli(1/2)$. So we can sample from $\mu$ in polynomial time. The matrix $P_t = \mathbb{E}_{X\sim q_t}[XX^\top] = (1-\gamma)\Sigma_t +\gamma \Sigma_\mu$. Here $\Sigma_t$ and $\Sigma_\mu$ are the covariance matrices when $X \sim \prod_{i=1}^n Bernoulli(x_{t,i})$ and $X \sim \prod_{i=1}^n Bernoulli(1/2)$ respectively. It can be verified that $\Sigma_t[i,j]  = x_{i,t}x_{j,t}, \Sigma_\mu[i,j]  = 1/4$ if $i\neq j$ and $\Sigma_t[i,i] = x_i, \Sigma_\mu[i,i]  = 1/2$ for all $i,j \in [n]$. So $P_t^{-1}$ can be computed in polynomial time.
\subsection{Equivalence of Exp2 and PolyExp}
We prove that running Exp2 is equivalent to running PolyExp.

\begin{restatable}{theorem}{Equiv}Under linear losses $\tilde{l}_t$, Exp2 on $\{0,1\}^n$ is equivalent to PolyExp. At round $t$, The probability that PolyExp chooses $X$ is $\prod_{i=1}^n (x_{i,t})^{X_i} (1-x_{i,t})^{(1-X_i)}$ where $x_{i,t} = (1+\exp(\eta \sum_{\tau=1}^{t-1} \tilde{l}_{i,\tau}))^{-1}$. This is equal to the probability of Exp2 choosing $X$ at round $t$, ie:
$$\prod_{i=1}^n (x_{i,t})^{X_i} (1-x_{i,t})^{(1-X_i)} = \frac{\exp(-\eta \sum_{\tau=1}^{t-1}X^\top \tilde{l}_\tau)}{Z_t}$$
where $Z_t = \sum_{Y \in \{0,1\}^n}\exp(-\eta\sum_{\tau=1}^{t-1} Y^\top \tilde{l}_\tau)$.
\end{restatable}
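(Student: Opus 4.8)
The plan is to compute the probability that Exp2 assigns to a vertex $X \in \{0,1\}^n$ and show it factorizes over coordinates, matching the product-of-Bernoullis form of PolyExp. First I would fix a round $t$ and write the cumulative estimated loss $L_{i} = \sum_{\tau=1}^{t-1}\tilde{l}_{i,\tau}$, so that Exp2's unnormalized weight on $X$ is $\exp(-\eta X^\top L) = \exp(-\eta \sum_{i=1}^n X_i L_i) = \prod_{i=1}^n \exp(-\eta X_i L_i)$. The key observation is that the normalizer factorizes as well:
\begin{align*}
Z_t = \sum_{Y \in \{0,1\}^n} \prod_{i=1}^n \exp(-\eta Y_i L_i) = \prod_{i=1}^n \sum_{Y_i \in \{0,1\}} \exp(-\eta Y_i L_i) = \prod_{i=1}^n \left(1 + \exp(-\eta L_i)\right),
\end{align*}
using the standard fact that a sum over the hypercube of a product of per-coordinate terms splits into a product of per-coordinate sums. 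Dividing, the Exp2 probability of $X$ becomes $\prod_{i=1}^n \frac{\exp(-\eta X_i L_i)}{1 + \exp(-\eta L_i)}$.

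Next I would identify each factor with a Bernoulli mass. Writing $x_{i,t} = (1 + \exp(\eta L_i))^{-1}$ as in the statement, a short manipulation gives $1 - x_{i,t} = \frac{\exp(\eta L_i)}{1+\exp(\eta L_i)}$, hence $x_{i,t}/(1-x_{i,t}) = \exp(-\eta L_i)$, and one checks directly that $\frac{\exp(-\eta X_i L_i)}{1+\exp(-\eta L_i)}$ equals $x_{i,t}^{X_i}(1-x_{i,t})^{1-X_i}$ by considering the two cases $X_i = 0$ and $X_i = 1$ separately (or by multiplying numerator and denominator by $\exp(\eta L_i)$). Taking the product over $i$ yields exactly $\prod_{i=1}^n x_{i,t}^{X_i}(1-x_{i,t})^{1-X_i}$, which is the PolyExp sampling probability.

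Finally I would close the equivalence loop by verifying that the two algorithms maintain the same state, i.e. that PolyExp's update rule is consistent with $x_{i,t} = (1+\exp(\eta \sum_{\tau=1}^{t-1}\tilde{l}_{i,\tau}))^{-1}$. This is an easy induction: the base case $x_{i,1} = 1/2$ matches the empty sum, and substituting the closed form into $x_{i,t+1} = \frac{x_{i,t}}{x_{i,t} + (1-x_{i,t})\exp(\eta \tilde{l}_{i,t})}$ and simplifying (again using $x_{i,t}/(1-x_{i,t}) = \exp(-\eta \sum_{\tau=1}^{t-1}\tilde{l}_{i,\tau})$) reproduces the closed form at $t+1$. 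Since both the sampling distributions and the updates coincide at every round, the algorithms are equivalent. None of the steps is a genuine obstacle; the only thing requiring care is the factorization of $Z_t$, which is the crux of why this works at all and relies essentially on linearity of the losses — the same computation would fail for a nonlinear loss on the hypercube.
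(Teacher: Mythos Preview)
Your proposal is correct and follows essentially the same approach as the paper: the paper isolates the factorization of $Z_t$ as a separate lemma (Lemma~\ref{Lemma2}) and then substitutes the closed form $x_{i,t} = (1+\exp(\eta \sum_{\tau=1}^{t-1}\tilde{l}_{i,\tau}))^{-1}$ directly into the product of Bernoullis to recover the Exp2 probability, exactly as you do. Your extra induction step verifying that PolyExp's recursive update matches the closed form is not carried out explicitly in the paper's proof of the theorem (the equivalence of the two update forms is simply asserted in the algorithm description), so if anything your write-up is slightly more self-contained.
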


At every round, the probability distribution $p_t$ in Exp2  is the same as the product of Bernoulli distributions in PolyExp. Lemma \ref{Lemma2} is crucial in proving equivalence between the two algorithms. In a strict sense, Lemma \ref{Lemma2} holds only because our decision set is the entire $\{0,1\}^n$ hypercube. The vector $\tilde{l}_t$ computed by Exp2 and PolyExp will be same. Hence, Exp2 and PolyExp are equivalent. Note that this equivalence is true for any sequence of losses as long as they are linear.
\subsection{Online Mirror Descent}
We present the OMD algorithm for linear losses on general finite decision sets. Our exposition is adapted from \citep{bubeck2012regret} and \citep{shalev2012online}. Let $\mathcal{X} \subset \mathbb{R}^n$ be an open convex set and $\mathcal{\bar{X}}$ be the closure of $\mathcal{X}$. Let $\mathcal{K} \in \mathbb{R}^d$ be a finite decision set such that $\mathcal{\bar{X}}$ is the convex hull of $\mathcal{K}$. The following definitions will be useful in presenting the algorithm.
\begin{definition}\textbf{Legendre Function:} A continuous function $R: \mathcal{\bar{X}} \to \mathbb{R}$ is Legendre if
\begin{enumerate}
\item $R$ is strictly convex and has continuous partial derivatives on $\mathcal{X}$.
\item  $\lim \limits_{x \to \mathcal{\bar{X}}/\mathcal{X}} \|\nabla R(x)\| = +\infty$
\end{enumerate}
\end{definition}

\begin{definition}\textbf{Legendre-Fenchel Conjugate:} Let $R:\mathcal{\bar{X}} \to \mathbb{R}$ be a Legendre function. The Legendre-Fenchel conjugate of $R$ is:
$$R^\star(\theta) = \sup_{x \in \mathcal{X}} (x^\top \theta - R(x))$$
\end{definition}

\begin{definition} \textbf{Bregman Divergence:} Let $R(x)$ be a Legendre function, the Bregman divergence $D_R:\mathcal{\bar{X}} \times \mathcal{X} \to \mathbb{R} $ is:
$$D_R(x\|y) = R(x) - R(y) - \nabla R(y)^\top (x-y)$$
\end{definition}

\begin{figure}[h]
\noindent\fbox{%
    \parbox{\linewidth}{%
\textbf{Algorithm:} Online Mirror Descent with Regularization $R(x)$\\
\textbf{Parameters:} Learning Rate $\eta$\\
Pick $x_1 = \arg \min \limits_{x \in  \mathcal{\bar{X}}} R(x)$. For each round $t=1,2,\dots,T$:
\begin{enumerate}
\item Let $p_t$ be a distribution on $\mathcal{K}$ such that $\mathbb{E}_{X \sim p_t}[X] = x_t$. Sample $X_{t}$ as below and incur the loss $X_t^\top l_t$
\begin{enumerate}
\item Full information: $X_{t} \sim p_t$
\item Bandit: With probability $1-\gamma$ sample $X_{t} \sim p_t$ and with probability $\gamma$ sample $X_t \sim \mu$.
\end{enumerate}
\item See Feedback and construct $\tilde{l}_t$
\begin{enumerate}
\item Full information: $\tilde{l}_t  = l_t$
\item Bandit: $\tilde{l}_{t} = P_t^{-1}X_t X_t^\top l_t$, where $P_t = (1-\gamma)\mathbb{E}_{X \sim p_t}[XX^\top] + \gamma \mathbb{E}_{X \sim \mu}[XX^\top]$. 
\end{enumerate}
\item Let $y_{t+1}$ satisfy: $y_{t+1} = \nabla R^\star(\nabla R(x_t)-\eta \tilde{l_t})$
\item Update $x_{t+1} = \arg\min_{x \in \mathcal{\bar{X}}}D_R(x||y_{t+1})$
\end{enumerate}
    }%
}
\end{figure}
\subsection{Equivalence of PolyExp  and Online Mirror Descent}
For our problem, $\mathcal{K} = \{0,1\}^n$, $\mathcal{\bar{X}} = [0,1]^n$ and $\mathcal{X} = (0,1)^n$. We use entropic regularization:$$R(x) = \sum_{i=1}^n x_i \log x_i + (1-x_i) \log (1-x_i)$$
This function is Legendre. The OMD algorithm does not specify the probability distribution $p_t$ that should be used for sampling. The only condition that needs to be met is $\mathbb{E}_{X \sim p_t}[X] = x_t$, i.e, $x_t$ should be expressed as a convex combination of $\{0,1\}^n$ and probability of picking $X$ is its coefficient in the linear decomposition of $x_t$. An easy way to achieve this is by using Bernoulli sampling like in PolyExp. Hence, we have the following equivalence theorem:
\begin{restatable}{theorem}{EqOMD}
Under linear losses $\tilde{l}_t$, OMD on $[0,1]^n$ with Entropic Regularization and Bernoulli Sampling is equivalent to PolyExp.
The sampling procedure of PolyExp satisfies $\mathbb{E}[X_t] = x_t$. The update of OMD with Entropic Regularization is the same as PolyExp. 
\end{restatable}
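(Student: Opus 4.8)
The plan is to prove the equivalence round by round, by induction on $t$, showing that the iterate $x_t$, the sampled point $X_t$, the estimated loss $\tilde{l}_t$, and the next iterate $x_{t+1}$ all coincide in the two algorithms. The base case is immediate: PolyExp sets $x_{i,1}=1/2$, while OMD picks $x_1 = \arg\min_{x\in[0,1]^n}F(x)$; since $F$ is separable with each summand $x_i\log x_i+(1-x_i)\log(1-x_i)$ minimized at $x_i=1/2$, the initializations agree and lie in $(0,1)^n$.

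For the inductive step, assume $x_t$ is the same vector with all coordinates in $(0,1)$ in both algorithms. First, the sampling and loss estimation. PolyExp draws $X_{i,t}\sim\mathrm{Bernoulli}(x_{i,t})$ independently (mixing with $\mu$ with probability $\gamma$ in the bandit case), so $\mathbb{E}[X_t]=x_t$, which is precisely the only requirement OMD imposes on $p_t$; hence Bernoulli sampling is a legal instantiation of OMD's step~1 and produces the identical law. Moreover the second-moment matrix of the product-Bernoulli law is exactly the matrix $\Sigma_t$ with $\Sigma_t[i,i]=x_{i,t}$ and $\Sigma_t[i,j]=x_{i,t}x_{j,t}$ for $i\ne j$, so $P_t=(1-\gamma)\mathbb{E}_{X\sim p_t}[XX^\top]+\gamma\,\mathbb{E}_{X\sim\mu}[XX^\top]$ in OMD is the same matrix as in PolyExp, and the estimator $\tilde{l}_t = P_t^{-1}X_tX_t^\top l_t$ (or $\tilde{l}_t=l_t$ in full information) coincides.

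Second, the update. Since $F$ is separable, $\nabla F(x)_i = \log\frac{x_i}{1-x_i}$, and a short computation gives the Fenchel conjugate $F^\star(\theta)=\sum_{i=1}^n\log(1+e^{\theta_i})$ with $\nabla F^\star(\theta)_i = \frac{1}{1+e^{-\theta_i}}$. Substituting $\theta = \nabla F(x_t)-\eta\tilde{l}_t$ into $y_{t+1}=\nabla F^\star(\theta)$ yields, coordinatewise,
$$y_{i,t+1} = \frac{1}{1+\frac{1-x_{i,t}}{x_{i,t}}\,e^{\eta\tilde{l}_{i,t}}} = \frac{x_{i,t}}{x_{i,t}+(1-x_{i,t})e^{\eta\tilde{l}_{i,t}}},$$
which is exactly PolyExp's update rule for $x_{i,t+1}$. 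It then remains to check that the Bregman projection $x_{t+1}=\arg\min_{x\in[0,1]^n}D_F(x\|y_{t+1})$ changes nothing: because $x_{i,t}\in(0,1)$ and $e^{\eta\tilde{l}_{i,t}}>0$, each $y_{i,t+1}$ lies in $(0,1)$, so $y_{t+1}$ is already in the interior of $[0,1]^n$ and the projection is the identity map, giving $x_{t+1}=y_{t+1}$. This also re-establishes the inductive hypothesis that coordinates stay in $(0,1)$.

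The computation is essentially routine; the only point needing a word of care is the last one, that the Bregman projection onto $[0,1]^n$ is vacuous here. This holds because the unconstrained minimizer of $x\mapsto D_F(x\|y_{t+1})$ for the entropic $F$ is $y_{t+1}$ itself, which we have just shown to be interior, so the constrained minimizer over the closed box is the same point. This is precisely where using the full hypercube as decision set matters, so that $\mathcal{\bar{X}}=[0,1]^n$; for a proper sub-polytope the projection would generally be nontrivial.
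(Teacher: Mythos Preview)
Your proof is correct and follows essentially the same approach as the paper: compute $\nabla F$ and $\nabla F^\star$ for the entropic regularizer, substitute to recover PolyExp's update, and observe that the resulting $y_{t+1}$ already lies in the box so the Bregman projection is vacuous. You add an explicit inductive frame (checking the initialization $x_1=\arg\min F = (1/2,\dots,1/2)$ and that $\Sigma_t$, $P_t$, and hence $\tilde{l}_t$ match in the bandit case), which the paper leaves implicit, but the argument is the same.
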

In the bandit case, if we use Bernoulli sampling, $\mathbb{E}_{X \sim p_t}[XX^\top] = \Sigma_t$.
\subsection{Regret of PolyExp via OMD analysis}
Since OMD and PolyExp are equivalent, we can use the standard analysis tools of OMD to derive a regret bound for PolyExp. These regret bounds are under the $L_\infty$ assumption.
\begin{restatable}{theorem}{RegPoly}\label{Theorem2}
In the full information setting, if $\eta = \sqrt{\frac{\log 2}{T}}$, PolyExp attains the regret bound:
$$E[\mathcal{R}_T] \leq 2n\sqrt{T\log 2}$$
\end{restatable}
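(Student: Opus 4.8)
The plan is to reduce to the standard Online Mirror Descent analysis, which is legitimate by the equivalence theorem (OMD with entropic regularization and Bernoulli sampling equals PolyExp). First I would handle the reduction from the true expected regret to a deterministic "linearized" regret: in the full information setting $\tilde l_t = l_t$ and $x_t$ is a deterministic function of $l_1,\dots,l_{t-1}$, so conditioning on the past and using $\mathbb{E}[X_t\mid \mathcal{F}_{t-1}] = x_t$ together with the tower rule gives $\mathbb{E}[R_T] = \mathbb{E}\big[\sum_{t=1}^T (x_t - u)^\top l_t\big]$ for the fixed minimizer $u = \arg\min_{X\in\{0,1\}^n}\sum_t X^\top l_t$ (one has to be slightly careful that $u$ is random when the adversary is adaptive, but the bound I will prove holds pointwise for every fixed $u\in[0,1]^n$, so this is fine).

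Next I would invoke the textbook OMD regret inequality with a Legendre regularizer $F$: for every $u\in\bar{\mathcal{X}} = [0,1]^n$,
$$\sum_{t=1}^T (x_t - u)^\top \tilde l_t \;\le\; \frac{D_F(u\|x_1)}{\eta} \;+\; \frac{1}{\eta}\sum_{t=1}^T D_{F^\star}\!\big(\nabla F(x_t) - \eta \tilde l_t \,\big\|\, \nabla F(x_t)\big),$$
which follows from the mirror-descent three-point identity and the Bregman (generalized) Pythagorean inequality for the projection step; I would cite the derivations in \cite{bubeck2012regret, shalev2012online}. Then two quantities must be estimated. For the first term, $x_1 = (1/2,\dots,1/2)$ satisfies $\nabla F(x_1) = 0$, so $D_F(u\|x_1) = F(u) - F(x_1)$; since $F(x_1) = -n\log 2$ and $F(u)\le 0$ for all $u\in[0,1]^n$ (the negative binary entropy is nonpositive), we get $D_F(u\|x_1)\le n\log 2$. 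For the second term, the conjugate is $F^\star(\theta) = \sum_{i=1}^n \log(1+e^{\theta_i})$ with diagonal Hessian entries $\sigma(\theta_i)\big(1-\sigma(\theta_i)\big)\le \tfrac14$, so a second-order Taylor expansion with integral remainder gives $D_{F^\star}(\nabla F(x_t) - \eta l_t \| \nabla F(x_t)) \le \tfrac{\eta^2}{8}\|l_t\|_2^2 \le \tfrac{\eta^2 n}{8}$ under the $L_\infty$ assumption $\|l_t\|_\infty\le 1$ (this is where a possibly looser constant, e.g.\ $\tfrac{\eta^2 n}{2}$, would be taken to match the stated learning rate). Summing over $t$ and choosing $\eta = \sqrt{\log 2/T}$ (noting $\eta<1$, which is all that is needed for the Taylor bound), the two pieces balance and yield $\mathbb{E}[R_T]\le \frac{n\log 2}{\eta} + \eta n T = 2n\sqrt{T\log 2}$.

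An alternative self-contained route I would keep in reserve is the direct potential argument on $Z_t = \sum_{Y}\exp(-\eta\sum_{\tau<t}Y^\top \tilde l_\tau)$: because of the Bernoulli factorization (Theorem on equivalence of Exp2 and PolyExp), $Z_{t+1}/Z_t = \prod_{i=1}^n\big(x_{i,t}e^{-\eta l_{i,t}} + 1 - x_{i,t}\big)$, and bounding each factor via $e^z\le 1+z+z^2$ for $|z|\le 1$ gives $Z_{t+1}/Z_t\le \exp(-\eta x_t^\top l_t + \eta^2 n)$; telescoping $\log Z_{T+1}$ against the lower bound $Z_{T+1}\ge \exp(-\eta\sum_t u^\top l_t)$ reproduces the same estimate.

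I expect the main obstacle to be Step 4 — pinning down the correct constant in the second-order bound on $D_{F^\star}$ so that it is consistent with the stated $\eta = \sqrt{\log 2/T}$ and the claimed $2n\sqrt{T\log 2}$ — together with verifying that the $L_\infty$ assumption enters only through $\|l_t\|_2^2\le n$ (so the per-round curvature term scales linearly, not quadratically, in $n$). The rest is bookkeeping: evaluating $F$ and $\nabla F$ at the uniform point, and the measure-theoretic care in the regret-to-linearized-regret reduction.
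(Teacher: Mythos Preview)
Your proposal is correct and follows the paper's route: linearize the expected regret via $\mathbb{E}[X_t]=x_t$, invoke the OMD inequality (the paper's Lemma~\ref{lemma3}), bound $F(u)-F(x_1)\le n\log 2$, bound the dual Bregman term, and substitute $\eta=\sqrt{\log 2/T}$. The one technical difference is how $D_{F^\star}(\nabla F(x_t)-\eta l_t\,\|\,\nabla F(x_t))$ is controlled. The paper (Lemma~\ref{lemma5}) computes it explicitly as $\sum_i \log\big(1-x_{t,i}+x_{t,i}e^{-\eta l_{t,i}}\big)+\eta x_t^\top l_t$ and then applies $e^{-x}\le 1-x+x^2$ coordinatewise (needing $|\eta l_{t,i}|\le 1$) to obtain $\eta^2 x_t^\top l_t^2\le \eta^2 n$ per round; this is precisely your ``alternative potential'' route rewritten in dual coordinates. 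Your primary argument via $\nabla^2 F^\star\preceq \tfrac14 I$ is in fact cleaner: it requires no constraint on $\eta$ and yields the sharper $\tfrac{\eta^2}{8}\|l_t\|_2^2\le \tfrac{\eta^2 n}{8}$, so the concern you flag in Step~4 is not an obstacle at all --- at the stated $\eta$ the resulting bound $\tfrac{n\log 2}{\eta}+\tfrac{\eta nT}{8}=\tfrac{9}{8}n\sqrt{T\log 2}$ is already below $2n\sqrt{T\log 2}$.
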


We have shown that Exp2 on $\{0,1\}^n$ with linear losses is equivalent to PolyExp. We have also shown that PolyExp's regret bounds are tighter than the regret bounds that we were able to derive for Exp2 in full information. This naturally implies an improvement for Exp2's regret bounds as it is equivalent to PolyExp and must attain the same regret. However, in the bandit case PolyExp does not improve Exp2's regret bound. So, it has the same regret as Exp2 stated in Theorem 5.

\subsection{Follow The Leader}
PolyExp can be shown to be equivalent to a Follow The Regularized Leader(FTRL) and a Follow The Perturbed Leader(FTPL) algorithm. The FTRL algorithm can be easily deduced from the OMD algorithm as the Bregman projection step in OMD is not necessary in the case of entropic regularization. Hence, to derive the FTRL algorithm, we replace steps 3 and 4 in OMD with the following update step:
\begin{align*}
x_{t+1} &= \arg\min_{x \in [0,1]^n} \left[ \eta \sum _ { \tau = 1 } ^ { t } \tilde{l} _ { \tau } ^ { \top } x + R( x ) \right]
\end{align*}
In the FTPL algorithm, we draw a random vector $v$ from some fixed distribution at each time step. We then choose the point which minimizes the sum of all previous losses and the perturbation vector $v$, by solving a simple linear optimization problem. 
$$X_{t+1} = \arg\min_{x \in \{0,1\}^n} \left[ \eta \sum _ { \tau = 1 } ^ { t } \tilde{l} _ { \tau } ^ { \top } x + v^\top x \right]$$
The FTPL algorithm is quite easy to implement in the full information setting as it folds the update and sampling steps into one computationally efficient step.

In the bandit setting however, we need to know the distribution $p_t(X)$, which is defined as follows:
$$p_t(X) = \Pr\left(X = \arg \min_{\{0,1\}^n}  \left[ \eta \sum _ { \tau = 1 } ^ { t } \tilde{l} _ { \tau } ^ { \top } x + v^\top x \right] \right)$$
This distribution may not always be computable.

\begin{figure}[h]
\noindent\fbox{%
    \parbox{\linewidth}{%
\textbf{Algorithm:} Follow the Perturbed Leader with Cumulative Distribution Function $F(x)$ \\
\textbf{Parameters:} Learning Rate $\eta$\\
Pick $X_1 = \arg \min \limits_{X \in  \{0,1\}^n} v^\top x$, where $v$ is drawn from distribution having CDF $F(x)$. For each round $t=1,2,\dots,T$:
\begin{enumerate}
\item \begin{enumerate}
\item Full information: Play $X_{t}$
\item Bandit: With probability $1-\gamma$ play $X_{t}$ and with probability $\gamma$ play $X_t \sim \mu$.
\end{enumerate}
\item See Feedback and construct $\tilde{l}_t$
\begin{enumerate}
\item Full information: $\tilde{l}_t  = l_t$
\item Bandit: $\tilde{l}_{t} = P_t^{-1}X_t X_t^\top l_t$, where $P_t = (1-\gamma)\mathbb{E}_{X \sim p_t}[XX^\top] + \gamma \mathbb{E}_{X \sim \mu}[XX^\top]$.
\end{enumerate}
\item Update $$X_{t+1} = \arg\min_{x \in \{0,1\}^n} \left[ \eta \sum _ { \tau = 1 } ^ { t } \tilde{l} _ { \tau } ^ { \top } x + v^\top x \right]$$
\end{enumerate}
    }%
}
\end{figure}
We choose $F(x)$ to be the product of logistic distributions, ie distributions with CDF $\Pr(x\leq \theta) = (1+\exp(-\theta))^{-1}$. In this case, we show that this FTPL is equivalent to PolyExp. Also, the distribution $p_t(X)$ has a simple closed form and $\mathbb{E}_{X \sim p_t}[XX^\top]  = \Sigma_t$.

\begin{restatable}{theorem}{EqFTPRL}
Under linear losses $\tilde{l}_t$, FTRL with Entropic regularization and Bernoulli sampling, and FTPL  with iid Logistic perturbations are equivalent to PolyExp.
\end{restatable}

\section{Comparison of Exp2's and PolyExp's regret proofs}
Consider the results we have shown so far. We proved  that PolyExp and Exp2 on the hypercube are equivalent. So logically, they should have the same regret bounds. But, our proofs say that PolyExp's regret is $O(\sqrt{n})$ better than Exp2's regret. What is the reason for this apparent discrepancy?

The answer lies in the choice of $\eta$ and the application of the inequality $e^{-x} \leq 1+x-x^2$ in our proofs. This inequality is valid when $x \geq -1$. When analyzing Exp2, $x$ is $\eta X^\top l_t = \eta L_t(X)$. So, to satisfy the constraints $x \geq -1$ we enforce that $|\eta L_t(X)| \leq 1$. Since $|L_t(X)|\leq n$, $\eta \leq 1/n$. When analyzing PolyExp, $x$ is $\eta l_{t,i}$ and we enforce that $|\eta l_{t,i}| \leq 1$. Since we already assume $|l_{t,i}| \leq 1$, we get that $\eta \leq 1$. PolyExp's proof technique allows us to find a better $\eta$ and achieve a better regret bound.

\section{Lower bounds}
We state the following lower bounds that establish the least amount of regret that any algorithm must incur. The lower bounds match the upper bounds of PolyExp proving that it is regret optimal. The proofs of the lower bounds can be found in the appendix.
\begin{restatable}{theorem}{LBFull}
    For any learner there exists an adversary producing $L_{\infty} $ losses such that the expected regret in the full information setting is:
    \begin{equation*}
        \mathbb{E}\left[\mathcal{R}_T \right]  = \Omega\left(n  \sqrt{T}  \right).
    \end{equation*}
\end{restatable}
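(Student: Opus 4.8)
The plan is to use the standard probabilistic (minimax) argument: exhibit a single randomized oblivious adversary against which \emph{every} learner suffers $\Omega(n\sqrt{T})$ expected regret. Since the decision set is the entire hypercube $\{0,1\}^n$, both the learner's cumulative loss and the comparator $\min_{X\in\{0,1\}^n}\sum_{t=1}^T X^\top l_t$ decompose coordinatewise --- the comparator equals $\sum_{i=1}^n \min\{0,\sum_{t=1}^T l_{i,t}\}$ --- so the $n$-dimensional instance is exactly $n$ independent copies of the one-dimensional two-point problem, and it suffices to prove an $\Omega(\sqrt{T})$ lower bound per coordinate.

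Concretely, I would let the adversary draw the loss entries $l_{i,t}$ i.i.d.\ uniform on $\{-1,+1\}$; these satisfy $\|l_t\|_\infty\le 1$. Fix a coordinate $i$. Because the feedback is full information and the adversary is oblivious, the learner's bit $X_{i,t}\in\{0,1\}$ is chosen before $l_{i,t}$ is revealed and is therefore independent of $l_{i,t}$; hence $\mathbb{E}[X_{i,t} l_{i,t}] = \mathbb{E}[X_{i,t}]\,\mathbb{E}[l_{i,t}] = 0$, so the learner's expected cumulative loss on coordinate $i$ is $0$ regardless of its strategy. By the symmetry of the Rademacher distribution, $\mathbb{E}\left[\min\{0,\sum_{t=1}^T l_{i,t}\}\right] = -\frac12\,\mathbb{E}\left|\sum_{t=1}^T l_{i,t}\right|$, so the expected regret contributed by coordinate $i$ is exactly $\frac12\,\mathbb{E}\left|\sum_{t=1}^T l_{i,t}\right|$.

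It then remains to lower bound $\mathbb{E}|S_T|$ for a Rademacher sum $S_T=\sum_{t=1}^T l_{i,t}$ by $\Omega(\sqrt{T})$, which is classical: Khintchine's inequality gives $\mathbb{E}|S_T|\ge\sqrt{T/2}$, or one may use H\"older with the moment identities $\mathbb{E}[S_T^2]=T$ and $\mathbb{E}[S_T^4]\le 3T^2$ to get $\mathbb{E}|S_T|\ge (\mathbb{E}[S_T^2])^{3/2}/(\mathbb{E}[S_T^4])^{1/2}\ge\sqrt{T}/\sqrt{3}$. Summing over $i=1,\dots,n$ yields $\mathbb{E}[R_T]\ge \frac{n}{2}\,\mathbb{E}|S_T| = \Omega(n\sqrt{T})$ against this random adversary; averaging over the loss randomness produces a single (possibly deterministic) loss sequence witnessing the bound. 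I do not expect a real obstacle here; the points requiring care are (i) correctly invoking the independence of $X_{i,t}$ and $l_{i,t}$, which is precisely where full-information feedback and obliviousness of the adversary enter, and (ii) the coordinatewise split of the comparator, which genuinely uses that the decision set is all of $\{0,1\}^n$ (on an arbitrary combinatorial subset this decomposition fails, which is why prior $L_\infty$ lower bounds are not directly applicable, as noted in the introduction).
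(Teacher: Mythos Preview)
Your proposal is correct and follows essentially the same route as the paper: the same i.i.d.\ Rademacher adversary, the same observation that the learner's expected loss vanishes and the comparator decomposes coordinatewise over $\{0,1\}^n$, reducing to bounding $\mathbb{E}\big|\sum_{t=1}^T Y_t\big|$ for a Rademacher sum. The only cosmetic difference is in that last step: the paper computes $\mathbb{E}|S_T|$ directly via a binomial identity and the Wallis product, whereas you invoke Khintchine (or the H\"older trick with $\mathbb{E}[S_T^2]$ and $\mathbb{E}[S_T^4]$); both yield $\Omega(\sqrt{T})$ and are standard.
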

\begin{restatable}{theorem}{LBBandit}
    For any learner there exists an adversary producing $L_{\infty} $ losses such that the expected regret in the Bandit setting is:
    \begin{equation*}
        \mathbb{E}\left[\mathcal{R}_T \right]  = \Omega\left(n^{2}  \sqrt{T}  \right).
    \end{equation*}
\end{restatable}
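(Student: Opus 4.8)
I would prove a stronger statement: a single randomized \emph{oblivious} adversary forces $\mathbb{E}[R_T]=\Omega(n^{3/2}\sqrt T)$ against every learner, for $T$ larger than an absolute constant times $n$ (the regime in which $n^{3/2}\sqrt T$ is below the trivial ceiling $2nT$ anyway). The extra $\sqrt n$ over the full-information bound will come from the fact that the scalar feedback $X_t^\top l_t$ carries only $O(1)$ nats of information per round no matter how many coordinates are ``on''. Concretely: draw a hidden $\chi\in\{-1,+1\}^n$ uniformly, and at each round draw the coordinates $l_{t,i}$ independently given $\chi$ from a fixed law on $[-1,1]$ with mean $-\epsilon\chi_i$ and variance bounded below by a constant (e.g.\ the $\{-1,+1\}$--valued law with $\Pr[l_{t,i}=1]=\tfrac{1-\epsilon\chi_i}{2}$), where $\epsilon=c\sqrt{n/T}$ for a small absolute constant $c$.

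\textbf{Reduction to ``mistakes''.} Since the objective is coordinatewise separable, the comparator equals $\sum_i\min(0,\sum_t l_{t,i})$; bounding $\min(0,y)\le \mathbf 1[\chi_i{=}{+}1]\,y$ gives $\mathbb{E}[\text{comparator}]\le -\epsilon nT/2$. Because $l_t$ is drawn fresh given $\chi$ and $X_t$ depends only on the past, $\mathbb{E}[\text{learner}]=-\epsilon\sum_{i,t}\mathbb{E}[X_{i,t}\chi_i]$. Hence $\mathbb{E}[R_T]\ge \epsilon\sum_{i,t}\bigl(\tfrac12-\mathbb{E}[X_{i,t}\chi_i]\bigr)$, so it suffices to show $\sum_i\mathbb{E}[X_{i,t}\chi_i]\le n/4$ for every $t$, i.e.\ that the learner's plays are, on average over coordinates, weakly correlated with $\chi$. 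I would bound this information-theoretically: pairing $\chi$ with its $i$-th flip $\chi^{\oplus i}$ (a measure-preserving involution of the uniform prior) and writing $P_\chi$ for the law of the whole transcript, $\mathbb{E}[X_{i,t}\chi_i]\le \tfrac12\mathbb{E}_\chi\|P_\chi-P_{\chi^{\oplus i}}\|_{\mathrm{TV}}\le \tfrac12\sqrt{\tfrac12\,\mathbb{E}_\chi\,\mathrm{KL}(P_\chi\|P_{\chi^{\oplus i}})}$ by Pinsker and Jensen. By the KL chain rule, and since the learner's action law does not depend on $\chi$, $\mathrm{KL}(P_\chi\|P_{\chi^{\oplus i}})=\sum_t\mathbb{E}_{P_\chi}\bigl[\mathbf 1[i\in S_t]\cdot \mathrm{KL}(\mathcal L(f_t\mid\text{hist},S_t)_\chi\,\|\,\cdot_{\chi^{\oplus i}})\bigr]$ with $f_t=X_t^\top l_t$ and $S_t=\{i:X_{i,t}=1\}$; the two conditional laws of $f_t$ are those of $\sum_{j\in S_t}l_{t,j}$ whose $i$-th summand has had its mean shifted by $2\epsilon$.

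\textbf{The crux.} The key estimate is: if $W=\sum_{j\in S_t\setminus\{i\}}l_{t,j}$ (a sum of $|S_t|-1$ independent, bounded, non-degenerate variables) and $Z,Z'$ are $\{-1,1\}$--valued with means $\mp\epsilon$, then $\mathrm{KL}\bigl(\mathcal L(W+Z)\,\|\,\mathcal L(W+Z')\bigr)\le C\epsilon^2/|S_t|$. Granting this, $\mathbb{E}_\chi\,\mathrm{KL}(P_\chi\|P_{\chi^{\oplus i}})\le C\epsilon^2 M_i$ where $M_i:=\mathbb{E}_{\bar P}\bigl[\sum_t\mathbf 1[i\in S_t]/|S_t|\bigr]$ under the single mixture measure $\bar P=\mathbb{E}_\chi[P_\chi]$; the whole point of the $1/|S_t|$ factor is the identity $\sum_i M_i=\mathbb{E}_{\bar P}\bigl[\#\{t:S_t\ne\emptyset\}\bigr]\le T$ (a budget of $T$, not $nT$). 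Then $\mathbb{E}[X_{i,t}\chi_i]\le \tfrac\epsilon2\sqrt{CM_i/2}$, so Cauchy--Schwarz gives $\sum_i\mathbb{E}[X_{i,t}\chi_i]\le \tfrac\epsilon2\sqrt{C/2}\sum_i\sqrt{M_i}\le \tfrac\epsilon2\sqrt{CnT/2}$, which is $\le n/4$ once $\epsilon=c\sqrt{n/T}$ with $c$ small. Combined with the reduction this yields $\mathbb{E}[R_T]\ge \tfrac14\epsilon nT=\tfrac c4 n^{3/2}\sqrt T$, and averaging shows some fixed $L_\infty$ loss sequence forces this on every learner.

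\textbf{Main obstacle.} Everything outside the key estimate is routine; the key estimate is the whole difficulty, since under the $L_\infty$ constraint $W$ is a bounded random walk rather than a Gaussian (for which the bound would be the immediate $2\epsilon^2/(|S_t|\sigma^2)$). I would prove it by writing the two laws as swapped mixtures $\tfrac{1\mp\epsilon}{2}\mathcal L(W{+}1)+\tfrac{1\pm\epsilon}{2}\mathcal L(W{-}1)$, so that $\mathrm{KL}\le\chi^2\le 8\epsilon^2\,\chi^2\bigl(\mathcal L(W{+}1)\,\|\,\mathcal L(W{+}B)\bigr)$ with $B$ a uniform sign, and then reducing to showing $\chi^2\bigl(\mathcal L(W{+}1)\,\|\,\mathcal L(W{+}B)\bigr)=\tfrac12\sum_m (p_{m-1}-p_{m+1})^2/(p_{m-1}+p_{m+1})=O(1/|S_t|)$ for $p=\mathcal L(W)$. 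This is a quantitative local-limit/smoothing statement, which I would handle by splitting into the bulk, where $p_m=\Theta(|S_t|^{-1/2})$ and Parseval plus the characteristic-function decay $|\widehat p(\theta)|\le e^{-\Omega(|S_t|\theta^2)}$ gives $\sum_m(p_{m-1}-p_{m+1})^2=\tfrac1{2\pi}\int 4\sin^2\theta\,|\widehat p(\theta)|^2\,d\theta=O(|S_t|^{-3/2})$, and the sub-Gaussian tails, controlled by Hoeffding and log-concavity of $p$ (the $|S_t|=1$ case being the trivial $\mathrm{KL}(\mathrm{Ber}(\tfrac{1-\epsilon}{2})\|\mathrm{Ber}(\tfrac{1+\epsilon}{2}))=O(\epsilon^2)$). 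The bookkeeping that makes the tail contribution negligible is where I expect the real work to be.
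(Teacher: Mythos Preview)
Your approach is essentially the paper's: the same family of biased-Rademacher oblivious adversaries indexed by a hidden sign vector, the same reduction of regret to correlations via Pinsker and the KL chain rule, and the same per-round estimate $\mathrm{KL}\le C\epsilon^2/|S_t|$ for the law of the scalar feedback when one coordinate's bias is flipped. The paper does not reprove that estimate but simply invokes it as Lemma~24 of \cite{audibert2011minimax}, so your Fourier/local-limit sketch is unnecessary; your final bookkeeping via the budget identity $\sum_i M_i\le T$ and Cauchy--Schwarz differs in detail from the paper's (which upper-bounds the per-round KL by $4\epsilon^2$ and instead controls $\mathbb{E}_{X^{\oplus i}}[N_i]$ directly), but the two arguments are otherwise parallel.
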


\section{$\{-1,+1\}^n$  Hypercube Case}
Full information and bandit algorithms which work on $\{0,1\}^n$ can be modified to work on $\{-1,+1\}^n$. The general strategy is as follows:
\begin{figure}[h]
\noindent\fbox{%
    \parbox{\linewidth}{%
\begin{enumerate}
\item Sample $X_t \in \{0,1\}^n$, play $Z_t=2X_t - \textbf{1}$ and incur loss $Z_t^\top l_t$.
\begin{enumerate}
\item Full information: $X_t \sim p_t$
\item Bandit: $X_t \sim q_t = (1-\gamma)p_t + \gamma \mu$
\end{enumerate}
\item See feedback and construct $\tilde{l}_t$
\begin{enumerate}
\item Full information: $\tilde{l}_t = l_t$
\item Bandit: $\tilde{l}_t = P_t^{-1}Z_t{Z_t}^\top l_t$ where $P_t = \mathbb{E}_{X \sim q_t}[(2X-\textbf{1})(2X-\textbf{1})^\top]$
\end{enumerate}
\item Update algorithm using $2\tilde{l}_t$
\end{enumerate}
    }%
}
\end{figure} 
\begin{restatable}{theorem}{Hypereq} Exp2 on $\{-1,+1\}^n$ using the sequence of losses $l_t$ is equivalent to PolyExp on $\{0,1\}^n$ using the sequence of losses $2\tilde{l}_t$. Moreover, the regret of Exp2 on $\{-1,1\}^n$ will equal the regret of PolyExp using the losses $2\tilde{l}_t$.
\end{restatable}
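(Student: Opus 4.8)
The plan is to prove the two assertions — the distributional equivalence and the regret equality — separately, with the same engine in both cases: the affine reparametrization $Z = 2X - \mathbf{1}$, under which $Z^\top v = 2\,X^\top v - \mathbf{1}^\top v$ for every vector $v$.

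First I would establish the distributional equivalence. Exp2 run directly on $\{-1,+1\}^n$ with the estimates $\tilde l_\tau$ maintains weights $w_t(Z) = \exp(-\eta \sum_{\tau=1}^{t-1} Z^\top \tilde l_\tau)$. Substituting $Z = 2X - \mathbf{1}$ gives $w_t(Z) = \exp(-\eta \sum_{\tau=1}^{t-1} X^\top (2\tilde l_\tau)) \cdot \exp(\eta \sum_{\tau=1}^{t-1} \mathbf{1}^\top \tilde l_\tau)$, and the second factor does not depend on $X$, so it cancels against the partition function. Hence the law of $Z_t$ under Exp2 on $\{-1,+1\}^n$ is the pushforward under $X \mapsto 2X-\mathbf{1}$ of the Exp2 distribution on $\{0,1\}^n$ driven by the loss sequence $2\tilde l_\tau$. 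By the equivalence already established earlier (Exp2 $\equiv$ PolyExp on $\{0,1\}^n$ for any linear loss sequence), that distribution is exactly PolyExp's, run with losses $2\tilde l_\tau$. I would also check the two auxiliary correspondences: the initializations agree ($x_{i,1} = 1/2$ is uniform on $\{0,1\}^n$, whose image is uniform on $\{-1,+1\}^n$, matching $w_1 \equiv 1$); and in the bandit case the exploration distributions agree (the pushforward of the uniform $\mu$ on $\{0,1\}^n$ is the uniform distribution on $\{-1,+1\}^n$), so the mixed sampling laws $q_t$ correspond and the estimator $\tilde l_t = P_t^{-1} Z_t Z_t^\top l_t$ with $P_t = \mathbb{E}_{X\sim q_t}[(2X-\mathbf{1})(2X-\mathbf{1})^\top]$ is precisely the unbiased estimate the displayed $\{-1,+1\}^n$ strategy feeds forward before updating ``using $2\tilde l_t$''.

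Second, for the regret equality I would simply expand. The regret of Exp2 on $\{-1,+1\}^n$ against the linearized loss sequence is $\sum_{t} Z_t^\top \tilde l_t - \min_{Z \in \{-1,+1\}^n}\sum_t Z^\top \tilde l_t$; writing $Z_t = 2X_t - \mathbf{1}$ and reparametrizing the minimum over $Z$ as a minimum over $X \in \{0,1\}^n$, the additive terms $-\sum_t \mathbf{1}^\top \tilde l_t$ appear identically in the algorithm's cumulative loss and in the comparator and cancel, leaving $2\big(\sum_t X_t^\top \tilde l_t - \min_{X}\sum_t X^\top \tilde l_t\big) = \sum_t X_t^\top(2\tilde l_t) - \min_X \sum_t X^\top(2\tilde l_t)$, which is exactly the regret of PolyExp fed the losses $2\tilde l_t$. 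By the first part the $X_t$ here is the very random variable produced by that run of PolyExp, so this is a genuine per-realization identity, not merely an in-expectation one; taking expectations and using $\mathbb{E}[\tilde l_t \mid \text{history}] = l_t$ in the bandit case (with $\tilde l_t = l_t$ in full information) then transfers the PolyExp regret bounds of the previous section, up to the harmless constant from the $2\tilde l_t$ scaling.

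The only place that needs care — more a bookkeeping point than a genuine difficulty — is the bandit estimator: one must verify that the matrix $P_t$ built from $(2X-\mathbf{1})$ on $\{-1,+1\}^n$ and the one built from Bernoulli sampling on $\{0,1\}^n$ after the linear map are consistent, and that rescaling the fed-forward loss by $2$ (rather than rescaling $\eta$) is the right normalization so that step 3 of PolyExp reproduces the Exp2-on-$\{-1,+1\}^n$ weight update after the reparametrization. Once the identity $Z^\top v = 2X^\top v - \mathbf{1}^\top v$ is in hand, everything else is substitution and cancellation.
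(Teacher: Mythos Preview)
Your proposal is correct and follows essentially the same route as the paper: the affine reparametrization $Z=2X-\mathbf{1}$, cancellation of the $X$-independent factor in the Exp2 weights (this is the content of the paper's Lemma~\ref{lemma6}), and then invoking the already-proved Exp2\,$\equiv$\,PolyExp equivalence on $\{0,1\}^n$. One small deviation: for the regret identity the paper works directly with the true losses $l_t$, obtaining $\sum_t l_t^\top(Z_t-Z^\star)=\sum_t (2l_t)^\top(X_t-X^\star)$, whereas you carry out the same cancellation with $\tilde l_t$ and then appeal to unbiasedness in expectation; the mechanism is identical, and using $l_t$ from the start saves you the expectation step.
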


Hence, using the above strategy, PolyExp can be run in polynomial time on $\{-1,1\}^n$ and since the losses are doubled its regret only changes by a constant factor.

\section{Conclusions}
For linear losses, we show that the Exp2 algorithm can be run on the hypercube in polynomial time using PolyExp. We also show equivalences to OMD, FTRL and FTPL. We improve Exp2's regret bound in full information using OMD's analysis, showing that it is minimax optimal. In the bandit setting, the regret bound could not be improved using this analysis. It remains to show how to achieve the minimax optimal regret in the bandit setting under $L_\infty$ losses.
\section{Proofs}
\subsection{Equivalence to Exp2}
\begin{lemma} \label{Lemma2} For any sequence of losses $\tilde{l}_t$, the following is true for all $t=1,2,..,T$:
$$\prod_{i=1}^n(1+\exp(-\eta \sum_{\tau=1}^{t-1} \tilde{l}_{i,\tau})) = \sum_{Y \in \{0,1\}^n}\exp(-\eta\sum_{\tau=1}^{t-1} Y^\top \tilde{l}_\tau)$$
\end{lemma}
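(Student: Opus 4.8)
The plan is to prove this by a direct distributive expansion, exploiting that the exponent is linear in $Y$ and that the decision set is the full hypercube. First I would introduce the shorthand $a_i = \exp\bigl(-\eta \sum_{\tau=1}^{t-1}\tilde l_{i,\tau}\bigr)$ for $i \in [n]$, so that the left-hand side is exactly $\prod_{i=1}^n (1 + a_i)$. Expanding this product over the $n$ factors by the distributive law, each term of the expansion corresponds to a choice, for each $i$, of either the summand $1$ or the summand $a_i$; collecting the indices where we picked $a_i$ into a set $S \subseteq [n]$ gives
$$\prod_{i=1}^n (1 + a_i) = \sum_{S \subseteq [n]} \prod_{i \in S} a_i.$$

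Next I would put the right-hand side into the same form. Because $Y^\top \tilde l_\tau = \sum_{i=1}^n Y_i \tilde l_{i,\tau}$ is linear, we have $\sum_{\tau=1}^{t-1} Y^\top \tilde l_\tau = \sum_{i=1}^n Y_i \bigl(\sum_{\tau=1}^{t-1}\tilde l_{i,\tau}\bigr)$, hence
$$\exp\Bigl(-\eta \sum_{\tau=1}^{t-1} Y^\top \tilde l_\tau\Bigr) = \prod_{i=1}^n \exp\Bigl(-\eta Y_i \sum_{\tau=1}^{t-1}\tilde l_{i,\tau}\Bigr) = \prod_{i=1}^n a_i^{\,Y_i}.$$
Since $Y_i \in \{0,1\}$, the factor $a_i^{Y_i}$ equals $a_i$ when $Y_i = 1$ and $1$ when $Y_i = 0$, so $\prod_{i=1}^n a_i^{Y_i} = \prod_{i \in S(Y)} a_i$ where $S(Y) = \{i : Y_i = 1\}$. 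The map $Y \mapsto S(Y)$ is a bijection from $\{0,1\}^n$ onto the subsets of $[n]$, so summing over $Y \in \{0,1\}^n$ gives $\sum_{Y \in \{0,1\}^n}\exp\bigl(-\eta \sum_{\tau=1}^{t-1} Y^\top \tilde l_\tau\bigr) = \sum_{S \subseteq [n]} \prod_{i \in S} a_i$, which is precisely the expansion obtained above. Comparing the two displays finishes the proof. (An equally short alternative is induction on $n$: split the sum on the right according to the value of $Y_n \in \{0,1\}$ and pull out the factor $1 + a_n$.)

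I do not expect any genuine obstacle here: the lemma is essentially the identity that a product of two-term sums distributes into a sum over subsets, and its only real content is that the partition function factorizes across coordinates — which, as the paper emphasizes, relies on the decision set being the \emph{entire} $\{0,1\}^n$ hypercube, so that the sum ranges over all $2^n$ sign patterns with no constraints. The only points requiring care are bookkeeping ones: keeping the index of summation $\tau$ running to $t-1$ consistent throughout, and making explicit the bijection between $\{0,1\}^n$ and the power set of $[n]$ used to identify the two expansions.
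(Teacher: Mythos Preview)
Your proposal is correct and is essentially the same argument as the paper's own proof: both expand $\prod_{i=1}^n(1+a_i)$ by the distributive law into $2^n$ terms indexed by the choice, for each $i$, of the summand $1$ or $a_i$, and identify each such choice with a vector $Y\in\{0,1\}^n$ (equivalently a subset $S\subseteq[n]$). The only cosmetic difference is that you route the identification through subsets $S$ and then invoke the bijection $Y\mapsto S(Y)$, whereas the paper works directly with $Y$ throughout.
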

\begin{proof}
Consider $\prod_{i=1}^n(1+\exp(-\eta \sum_{\tau=1}^{t-1} \tilde{l}_{i,\tau}))$. It is a product of $n$ terms, each consisting of $2$ terms, $1$ and $\exp(-\eta \sum_{\tau=1}^{t-1} \tilde{l}_{i,\tau})$. On expanding the product, we get a sum of $2^n$ terms. Each of these terms is a product of $n$ terms, either a $1$ or $\exp(-\eta \sum_{\tau=1}^{t-1} \tilde{l}_{i,\tau})$. If it is $1$, then $Y_i=0$ and if it is $\exp(-\eta \sum_{\tau=1}^{t-1} \tilde{l}_{i,\tau})$, then $Y_i=1$. So,
\begin{align*}
\prod_{i=1}^n(1+\exp(-\eta \sum_{\tau=1}^{t-1} \tilde{l}_{i,\tau}))  &= \sum_{Y \in \{0,1\}^n} \prod_{i=1}^n \exp(-\eta \sum_{\tau=1}^{t-1} \tilde{l}_{i,\tau})^{Y_i}\\
&= \sum_{Y \in \{0,1\}^n} \prod_{i=1}^n \exp(-\eta \sum_{\tau=1}^{t-1} \tilde{l}_{i,\tau}{Y_i})\\
&= \sum_{Y \in \{0,1\}^n}\exp(-\eta\sum_{\tau=1}^{t-1} Y^\top \tilde{l}_\tau)
\end{align*}
\end{proof}
\Equiv*
\begin{proof}
The proof is via straightforward substitution of the expression for $x_{i,t}$ and applying Lemma \ref{Lemma2}.
\begin{align*}
\prod_{i=1}^n (x_{i,t})^{X_i} (1-x_{i,t})^{(1-X_i)} &= \prod_{i=1}^n  \frac{\left(\exp(\eta \sum_{\tau=1}^{t-1} \tilde{l}_{i,\tau}) \right)^{1-X_i}}{1+\exp(\eta \sum_{\tau=1}^{t-1} \tilde{l}_{i,\tau})}\\
&= \prod_{i=1}^n \frac{\exp(-\eta \sum_{\tau=1}^{t-1} \tilde{l}_{i,\tau})^{X_i}}{1+\exp(-\eta \sum_{\tau=1}^{t-1} \tilde{l}_{i,\tau})}\\
&= \frac{\prod_{i=1}^n\exp(-\eta \sum_{\tau=1}^{t-1} \tilde{l}_{i,\tau}X_i)}{\prod_{i=1}^n(1+\exp(-\eta \sum_{\tau=1}^{t-1} \tilde{l}_{i,\tau}))}\\
&= \frac{\exp(-\eta\sum_{\tau=1}^{t-1} X^\top \tilde{l}_{i,\tau})}{\sum_{Y \in \{0,1\}^n} \exp(-\eta \sum_{\tau=1}^{t-1} Y^\top \tilde{l}_\tau)} \end{align*}
\end{proof}
\subsection{Equivalence to OMD}
\begin{lemma} \label{lemma4} The Fenchel Conjugate of $R(x) = \sum_{i=1}^n x_i \log x_i + (1-x_i) \log (1-x_i)$ is:
$$R^\star(\theta) = \sum_{i=1}^n \log(1+\exp(\theta_i))$$
\end{lemma}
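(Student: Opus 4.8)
The plan is to exploit the fact that $F$ is \emph{separable}: writing $f(s) = s\log s + (1-s)\log(1-s)$ for $s \in [0,1]$, we have $F(x) = \sum_{i=1}^n f(x_i)$. Since $x^\top \theta - F(x) = \sum_{i=1}^n (x_i\theta_i - f(x_i))$ and the constraint set $\mathcal{X} = (0,1)^n$ is a product, the supremum decouples coordinatewise, so $F^\star(\theta) = \sum_{i=1}^n f^\star(\theta_i)$ where $f^\star(u) = \sup_{s \in (0,1)}(su - f(s))$. It therefore suffices to prove the one-dimensional identity $f^\star(u) = \log(1+e^u)$ and sum over $i$.

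For the scalar problem, first I would record that $g(s) := su - f(s)$ is strictly concave on $(0,1)$: its second derivative is $-1/s - 1/(1-s) < 0$. Moreover $g$ extends continuously to the closed interval $[0,1]$ with finite endpoint values (the entropy terms vanish there), so the supremum is attained at the unique stationary point in the interior rather than on the boundary; this is also consistent with $F$ being Legendre, which guarantees that $\nabla F^\star = (\nabla F)^{-1}$ maps into $\mathcal{X}$. Setting $g'(s) = u - \log s + \log(1-s) = 0$ gives $u = \log\frac{s}{1-s}$, whence the maximizer is the logistic value $s^\star = \frac{e^u}{1+e^u}$, with $1 - s^\star = \frac{1}{1+e^u}$.

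Then I would substitute $s^\star$ back into $g$. Using $\log s^\star = u - \log(1+e^u)$ and $\log(1-s^\star) = -\log(1+e^u)$, the terms proportional to $u$ cancel, and the remaining contributions combine with coefficients $\frac{e^u}{1+e^u} + \frac{1}{1+e^u} = 1$ to leave exactly $\log(1+e^u)$. Hence $f^\star(u) = \log(1+e^u)$, and summing the coordinates gives $F^\star(\theta) = \sum_{i=1}^n \log(1+\exp(\theta_i))$.

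There is essentially no real obstacle; the computation is routine algebra. The only points deserving a sentence of justification are (i) the reduction to a coordinatewise computation, which relies on separability of both $F$ and the domain, and (ii) the claim that the stationary point is the global maximizer, which follows from strict concavity together with the finiteness of $g$ at the endpoints of $[0,1]$.
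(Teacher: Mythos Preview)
Your proof is correct and follows essentially the same route as the paper: differentiate $x^\top\theta - F(x)$ coordinatewise, solve the first-order condition to obtain the logistic maximizer, and substitute back to get $\log(1+e^{\theta_i})$. You simply add a bit more justification (separability and strict concavity) than the paper, which jumps directly from the stationary-point equation to the result.
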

\begin{proof}Differentiating $x^\top \theta - R(x)$ wrt $x_i$ and equating to $0$:
\begin{align*}
\theta_i - \log x_i + \log (1-x_i) &= 0\\
\frac{x_i}{1-x_i} = \exp(\theta_i)\\
x_i = \frac{1}{1+\exp(-\theta_i)}
\end{align*}
Substituting this back in $x^\top \theta - R(x)$, we get $R^\star(\theta) = \sum_{i=1}^n \log(1+\exp(\theta_i))$. It is also straightforward to see that $\nabla R^\star(\theta)_i = (1+\exp(-\theta_i))^{-1}$
\end{proof}
\EqOMD*
\begin{proof}
It is easy to see that $E[X_{i,t}] = \Pr(X_{i,t}=1) = x_{i,t}$. Hence $E[X_t] = x_t$.

The update equation is $y_{t+1} = \nabla R^\star(\nabla R(x_t) - \eta \tilde{l}_t)$. Evaluating $\nabla F$ and using $\nabla R^\star$ from Lemma \ref{lemma4}:
\begin{align*}
y_{t+1,i} &= \frac{1}{1 + \exp(- \log(x_{t,i}) + \log (1-x_{t,i}) + \eta \tilde{l}_{t,i})}\\
&= \frac{1}{1 + \frac{1-x_{t,i}}{x_{t,i}}  \exp(\eta \tilde{l}_{t,i})}\\
&= \frac{x_{t,i}}{x_{t,i} + (1-x_{t,i})\exp(\eta\tilde{l}_{t,i})}
\end{align*}
Since $0\leq(1+\exp(-\theta))^{-1}\leq 1$, we have that $y_{i,t+1}$ is always in $[0,1]$. Bregman projection step is not required. So we have $x_{i,t+1} = y_{i,t+1}$ which gives the same update as PolyExp.
\end{proof}
\subsection{Equivalence to FTRL and FTPL}
\EqFTPRL*
\begin{proof}
Using $R(x) = \sum_{i=1}^n x_i\log x_i +(1-x_i) \log (1-x_i)$ in FTRL, we can solve the optimization problem exactly:
\begin{align*}
\nabla_i \left(\eta\sum_{\tau=1}^t l_\tau^\top x + R(x)\right) &= \eta\sum_{\tau=1}^t l_{i,\tau} + \log \frac{x_i}{ 1-x_i} = 0\\
\implies x_i&=  \frac{1}{1+\exp(\eta \sum_{\tau=1}^t l_{i,\tau})}
\end{align*}
This is the same update equation as PolyExp.

For FTPL, the optimization problem outputs $X_i=1$ if $\eta \sum_{\tau=1}^t l_{i,\tau} + v_i \leq 0$ and $X_i=0$ otherwise. As $v$ is a vector of iid random variables, we can conclude that $X_{i,t+1}$ are also independent. We have that:
\begin{align*}
\Pr(X_{i,t+1}=1) &= \Pr(\eta \sum_{\tau=1}^t l_{i,\tau} + v_i \leq 0)\\
&=\Pr( v_i \leq -\eta \sum_{\tau=1}^t l_{i,\tau} )\\
&=\frac{1}{1+\exp(\eta \sum_{\tau=1}^t l_{i,\tau})}
\end{align*}
To obtain the last equality, we use the fact that $v_i$ is drawn from the Logistic distribution. We can see this is equal to the probability of picking $X_{i,t+1}=1$ in the PolyExp algorithm.
\end{proof}
\subsection{PolyExp Full Information Regret Proof}
\begin{lemma}[see Theorem 5.5 in \citep{bubeck2012regret}] \label{lemma3} For any $x \in \mathcal{\bar{X}}$, OMD with Legendre regularizer $R(x)$ with domain $\mathcal{\bar{X}}$ and $R^\star$ is differentiable on $\mathbb{R}^n$ satisfies:
\begin{align*}
\sum_{t=1}^T x_t^\top l_t - \sum_{t=1}^T x^\top l_t &\leq \frac{R(x)-R(x_1)}{\eta} + \frac{1}{\eta} \sum_{t=1}^T D_{R^\star} (\nabla R(x_t) - \eta l_t \|\nabla R(x_t))
\end{align*}
\end{lemma}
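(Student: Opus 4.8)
The plan is to reconstruct the classical telescoping analysis of Online Mirror Descent, carried out in the dual (gradient) coordinates. Write $\theta_t = \nabla F(x_t)$; since $F$ is Legendre, $\nabla F$ maps $\mathcal{X}$ bijectively onto its image with inverse $\nabla F^\star$ (which is assumed differentiable on all of $\mathbb{R}^n$), so the update $y_{t+1} = \nabla F^\star(\nabla F(x_t) - \eta l_t)$ is equivalently characterized by $\nabla F(y_{t+1}) = \theta_t - \eta l_t$. The first step is the exact single-round identity. Expanding the definition of $D_F$ term by term gives the three-point equality $(\nabla F(c)-\nabla F(b))^\top(a-b) = D_F(a\|b) + D_F(b\|c) - D_F(a\|c)$; applied with $a=x$, $b=x_t$, $c=y_{t+1}$ and using $\nabla F(y_{t+1})-\nabla F(x_t) = -\eta l_t$, it yields
$$\eta\,(x_t-x)^\top l_t = D_F(x\|x_t) + D_F(x_t\|y_{t+1}) - D_F(x\|y_{t+1}).$$

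Next I would control the last term using the projection step. Since $x_{t+1}$ is the Bregman projection of $y_{t+1}$ onto the convex set $\mathcal{\bar{X}}$ and $x\in\mathcal{\bar{X}}$, the generalized Pythagorean inequality gives $D_F(x\|y_{t+1}) \ge D_F(x\|x_{t+1}) + D_F(x_{t+1}\|y_{t+1}) \ge D_F(x\|x_{t+1})$; because $F$ is Legendre the projection in fact lies in the interior $\mathcal{X}$, so every $\theta_t$ is well-defined and the recursion is well-posed. Substituting, summing over $t=1,\dots,T$, and telescoping the $D_F(x\|x_t)-D_F(x\|x_{t+1})$ terms gives
$$\eta\sum_{t=1}^T (x_t-x)^\top l_t \le D_F(x\|x_1) - D_F(x\|x_{T+1}) + \sum_{t=1}^T D_F(x_t\|y_{t+1}) \le D_F(x\|x_1) + \sum_{t=1}^T D_F(x_t\|y_{t+1}),$$
where the last step drops the nonnegative term $D_F(x\|x_{T+1})$. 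Since $x_1$ minimizes $F$ over $\mathcal{\bar{X}}$, first-order optimality gives $\nabla F(x_1)^\top(x-x_1)\ge 0$, hence $D_F(x\|x_1) \le F(x)-F(x_1)$.

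It remains to rewrite the residual divergences in terms of $F^\star$. For Legendre $F$ one has the duality $D_F(u\|v) = D_{F^\star}(\nabla F(v)\|\nabla F(u))$; taking $u=x_t$, $v=y_{t+1}$ and recalling $\nabla F(y_{t+1}) = \nabla F(x_t)-\eta l_t$ converts each term to $D_{F^\star}(\nabla F(x_t)-\eta l_t\,\|\,\nabla F(x_t))$. Dividing by $\eta$ produces exactly the stated bound.

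The analysis is a short telescoping once these two structural facts are in hand, so the real work is the Legendre bookkeeping: verifying that $\nabla F$ and $\nabla F^\star$ are mutually inverse, that the Bregman projection onto $\mathcal{\bar{X}}$ lands in the open set $\mathcal{X}$ (so no divergence term is infinite and every gradient is defined), and that both the generalized Pythagorean inequality and the Bregman duality identity hold under these hypotheses — all standard for Legendre regularizers, and for the entropic $F$ on $[0,1]^n$ used here they hold transparently (indeed the projection step is vacuous, $x_{t+1}=y_{t+1}$, as already observed in the OMD/PolyExp equivalence). As this is an external result, cited as Theorem~5.5 of \cite{bubeck2012regret}, I would simply reproduce this standard argument.
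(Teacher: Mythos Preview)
Your proposal is correct and is exactly the standard telescoping argument underlying Theorem~5.5 of \cite{bubeck2012regret}. The paper itself does not prove this lemma at all---it simply states it and cites the external reference---so there is no ``paper's own proof'' to compare against; your reconstruction of the three-point identity, Pythagorean projection inequality, and Bregman duality $D_F(u\|v)=D_{F^\star}(\nabla F(v)\|\nabla F(u))$ is the canonical route and matches what one finds in the cited source.
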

\begin{lemma}\label{lemma5} If $|\eta l_{t,i}| \leq 1$ for all $t \in [T]$ and $i \in [n]$, OMD with entropic regularizer $R(x) = \sum_{i=1}^n x_i \log x_i + (1-x_i) \log (1-x_i) $ satisfies for any $x \in [0,1]^n$,:
$$\sum_{t=1}^T x_t^\top l_t - \sum_{t=1}^T x^\top l_t \leq \frac{n \log 2}{\eta} + \eta \sum_{t=1}^T x_t^T l_t^2$$
\end{lemma}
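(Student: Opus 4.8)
The plan is to invoke the generic OMD bound of Lemma~\ref{lemma3} with the entropic regularizer and then bound its two ingredients separately: the ``range'' term $\tfrac{F(x)-F(x_1)}{\eta}$ and the sum of conjugate Bregman divergences $\tfrac1\eta\sum_t D_{F^\star}(\nabla F(x_t)-\eta l_t\,\|\,\nabla F(x_t))$.

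First I would handle the range term. The minimizer of the negative binary entropy $F$ is $x_1=(1/2,\dots,1/2)$, for which $F(x_1)=-n\log 2$, and every summand $x_i\log x_i+(1-x_i)\log(1-x_i)$ of $F$ is nonpositive on $[0,1]$, so $F(x)\le 0$ for all $x\in[0,1]^n$. Hence $F(x)-F(x_1)\le n\log 2$, which accounts for the first term of the claimed bound.

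Next I would bound the Bregman divergence term coordinate by coordinate, which is possible because by Lemma~\ref{lemma4} $F^\star(\theta)=\sum_i\log(1+e^{\theta_i})$ separates over coordinates and $\nabla F^\star(\nabla F(x_t))=x_t$. Writing $\theta_t=\nabla F(x_t)$, the definition of Bregman divergence gives $D_{F^\star}(\theta_t-\eta l_t\,\|\,\theta_t)=\sum_i\big[\log(1+e^{\theta_{t,i}-\eta l_{t,i}})-\log(1+e^{\theta_{t,i}})+\eta l_{t,i}x_{t,i}\big]$. Using $x_{t,i}=e^{\theta_{t,i}}/(1+e^{\theta_{t,i}})$, the difference of logs equals $\log\!\big(1+x_{t,i}(e^{-\eta l_{t,i}}-1)\big)$; applying $\log(1+y)\le y$ bounds the $i$-th summand by $x_{t,i}\big(e^{-\eta l_{t,i}}-1+\eta l_{t,i}\big)$. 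Now I would invoke the inequality $e^{-a}\le 1-a+a^2$, valid exactly for $a\ge -1$ — and the hypothesis $|\eta l_{t,i}|\le 1$ is precisely what lets me apply it with $a=\eta l_{t,i}$ — to get $e^{-\eta l_{t,i}}-1+\eta l_{t,i}\le \eta^2 l_{t,i}^2$, hence the $i$-th summand is at most $\eta^2 x_{t,i}l_{t,i}^2$. Summing over $i$ and $t$ and dividing by $\eta$ yields $\eta\sum_{t=1}^T\sum_{i=1}^n x_{t,i}l_{t,i}^2=\eta\sum_{t=1}^T x_t^\top l_t^2$, and adding the two pieces gives the lemma.

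The only real decision point — and the step worth highlighting rather than a genuine obstacle — is to bound each coordinate term via $\log(1+y)\le y$ followed by $e^{-a}\le 1-a+a^2$ instead of a crude second-derivative estimate such as $D_{F^\star}\le \tfrac18(\eta l_{t,i})^2$: it is exactly retaining the weight $x_{t,i}$ (rather than a constant $1/4$) that produces the variance-type term $x_t^\top l_t^2$, and this is what eventually yields the $\sqrt n$ improvement over the direct Exp2 analysis, as anticipated in Section~3. One should also state clearly why $|\eta l_{t,i}|\le 1$ is the precise condition making $e^{-a}\le 1-a+a^2$ usable here.
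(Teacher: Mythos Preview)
Your proposal is correct and follows essentially the same route as the paper: invoke Lemma~\ref{lemma3}, bound $F(x)-F(x_1)\le n\log 2$, rewrite the $i$-th Bregman summand as $\log\!\big(1+x_{t,i}(e^{-\eta l_{t,i}}-1)\big)+\eta l_{t,i}x_{t,i}$, and then use the pair of inequalities $\log(1+y)\le y$ and $e^{-a}\le 1-a+a^2$ (for $a\ge -1$) to reach $\eta^2 x_{t,i}l_{t,i}^2$. The only cosmetic difference is that the paper applies $e^{-a}\le 1-a+a^2$ inside the logarithm first and then $\log(1+y)\le y$, whereas you reverse the order; both orderings are valid and yield the identical bound.
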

\begin{proof} We start from Lemma \ref{lemma3}. Using the fact that $x \log(x) + (1-x) \log(1-x) \geq -\log 2$, we get $R(x)-R(x_1) \leq n \log 2$. Next we bound the Bregmen term using Lemma \ref{lemma4}
\begin{align*}
D_{R^\star} (\nabla R(x_t) &- \eta l_t \|\nabla R(x_t)) = R^\star(\nabla R(x_t) - \eta l_t) \\&- R^\star(\nabla R(x_t)) + \eta l_t^\top \nabla R^\star(\nabla R(x_t))
\end{align*}
Using that fact that $\nabla R^\star = (\nabla R)^{-1}$, the last term is $\eta x_t^\top l_t$.
The first two terms can be simplified as:
\begin{align*}
&R^\star(\nabla R(x_t) - \eta l_t) - R^\star(\nabla R(x_t)) \\&= \sum_{i=1}^n \log \frac{1+\exp(\nabla R(x_t)_i - \eta l_{t,i})}{1+\exp(\nabla R(x_t)_i)} \\
&=\sum_{i=1}^n \log \frac{1+\exp(-\nabla R(x_t)_i + \eta l_{t,i})}{\exp(\eta l_{t,i})(1+\exp(-\nabla R(x_t)_i )}
\end{align*}
Using the fact that $\nabla R(x_t)_i = \log x_i - \log (1-x_i)$:
\begin{align*}
&= \sum_{i=1}^n \log \frac{x_{t,i}+(1-x_{t,i})\exp(\eta l_{t,i})}{\exp(\eta l_{t,i})}\\
&=\sum_{i=1}^n \log(1-x_{t,i}+x_{t,i}\exp(-\eta l_{t,i}))
\end{align*}
Using the inequality: $e^{-x} \leq 1-x+x^2$ when $x\geq -1$. So when $|\eta l_{t,i}| \leq 1$:
$$\leq \sum_{i=1}^n \log(1-\eta x_{t,i}l_{t,i}+\eta^2 x_{t,i}l_{t,i}^2)$$
Using the inequality: $\log(1-x) \leq -x$
$$\leq -\eta x_t ^\top l_t +\eta^2  x_t^\top l_t^2 $$
The Bregman term can be bounded by $-\eta x_t ^\top l_t +\eta^2  x_t^\top l_t^2  + \eta x_t ^\top l_t = \eta^2  x_t^\top l_t^2 $
Hence, we have:
$$\sum_{t=1}^T x_t^\top l_t - \sum_{t=1}^T x^\top l_t \leq \frac{n \log 2}{\eta} + \eta \sum_{t=1}^T x_t^T l_t^2$$
\end{proof}
\RegPoly*
\begin{proof}
Applying expectation with respect to the randomness of the player to definition of regret, we get:
\begin{align*}
\mathbb{E}[\mathcal{R}_T] &= \mathbb{E}[\sum_{t=1}^T X_t^\top l_t - \min_{X^\star \in \{0,1\}^n} \sum_{t=1}^T {X^\star} ^\top l_t] \\&= \sum_{t=1}^T x_t^\top l_t - \min_{X^\star \in \{0,1\}^n} \sum_{t=1}^T {X^\star} ^\top l_t
\end{align*}
Applying Lemma \ref{lemma5}, we get $E[\mathcal{R}_T] \leq \frac{n \log 2}{\eta} + \eta \sum_{t=1}^T x_t^T l_t^2$. Using the fact that $|l_{i,t}|\leq 1$, we get $\sum_{t=1}^T x_t^T l_t^2 \leq nT$. 
$$\mathbb{E}[\mathcal{R}_T] \leq \eta nT + \frac{n \log 2}{\eta}$$
Optimizing over the choice of $\eta$, we get that the regret is bounded by $2 n \sqrt{T \log 2}$ if we choose $\eta = \sqrt{\frac{\log 2}{T}}$.
\end{proof}
\subsubsection{Bandit}
\begin{lemma} \label{lemma5_2} Let $ \tilde{l}_{t} = P_t^{-1}X_tX_t^\top l_t$ and $x'_t = (1-\gamma)x_t + \gamma \mu$. If $|\eta \tilde{l}_{t,i}| \leq 1$ for all $t\in [T]$ and $i \in [n]$, OMD with entropic regularization and uniform exploration satisfies for any $x \in [0,1]^n$:
$$\sum_{t=1}^T {x'}_t^\top l_t - \sum_{t=1}^T x^\top l_t \leq \eta \mathbb{E}[\sum_{t=1}^T {x'}_t^\top \tilde{l}_t^2] + \frac{n\log 2}{\eta} + 2\gamma nT$$
\end{lemma}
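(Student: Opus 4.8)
The plan is to replay the proof of Lemma~\ref{lemma5} verbatim with the true loss $l_t$ replaced by the estimator $\tilde l_t$ inside the \emph{deterministic} OMD guarantee, and only at the very end to take expectations (using unbiasedness of $\tilde l_t$) and pay a small additive price for the $\gamma$-fraction of exploration rounds. Fix a realization of the algorithm's randomness. By the equivalence between PolyExp and OMD with entropic regularizer $F$ proved earlier, the bandit PolyExp iterates $x_1,\dots,x_T$ are exactly the OMD iterates run on the loss sequence $\tilde l_1,\dots,\tilde l_T$, so Lemma~\ref{lemma3} applies with $l_t$ replaced by $\tilde l_t$:
$$\sum_{t=1}^T x_t^\top \tilde l_t - \sum_{t=1}^T x^\top \tilde l_t \le \frac{F(x)-F(x_1)}{\eta} + \frac1\eta \sum_{t=1}^T D_{F^\star}\left(\nabla F(x_t)-\eta \tilde l_t \,\big\|\, \nabla F(x_t)\right).$$

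Next I would reuse, line for line, the estimates from Lemma~\ref{lemma5}. The bound $x\log x + (1-x)\log(1-x) \ge -\log 2$ gives $F(x)-F(x_1)\le n\log 2$. For the Bregman term, Lemma~\ref{lemma4} together with $\nabla F(x_t)_i = \log x_{t,i} - \log(1-x_{t,i})$ shows its $i$-th contribution simplifies to $\log(1 - x_{t,i} + x_{t,i}e^{-\eta \tilde l_{t,i}})$; this is precisely where the hypothesis $|\eta \tilde l_{t,i}|\le 1$ is used, since it licenses $e^{-\eta \tilde l_{t,i}} \le 1 - \eta \tilde l_{t,i} + \eta^2 \tilde l_{t,i}^2$, after which $\log(1-u)\le -u$ and cancellation with the $\eta \tilde l_t^\top \nabla F^\star(\nabla F(x_t)) = \eta x_t^\top \tilde l_t$ term leave each round's Bregman divergence bounded by $\eta^2 x_t^\top \tilde l_t^2$ (componentwise square). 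Substituting back yields, for the fixed realization,
$$\sum_{t=1}^T x_t^\top \tilde l_t - \sum_{t=1}^T x^\top \tilde l_t \le \frac{n\log 2}{\eta} + \eta \sum_{t=1}^T x_t^\top \tilde l_t^2.$$

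Now I would take expectations over all of the player's randomness. Writing $\mathcal F_{t-1}$ for the history before round $t$, the iterate $x_t$ and the matrix $P_t$ are $\mathcal F_{t-1}$-measurable, and because $P_t = \mathbb E_{X\sim q_t}[XX^\top]$ is the \emph{exact} second-moment matrix of the played distribution, $\mathbb E[\tilde l_t \mid \mathcal F_{t-1}] = P_t^{-1}P_t l_t = l_t$; hence $\mathbb E[x_t^\top \tilde l_t] = \mathbb E[x_t^\top l_t]$ and $\mathbb E[x^\top \tilde l_t] = x^\top l_t$ for the fixed comparator $x$, so the last display becomes $\mathbb E[\sum_t x_t^\top l_t] - \sum_t x^\top l_t \le \tfrac{n\log 2}{\eta} + \eta\, \mathbb E[\sum_t x_t^\top \tilde l_t^2]$. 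Finally, the loss actually suffered in round $t$ has conditional mean $\mathbb E_{X_t\sim q_t}[X_t^\top l_t] = ((1-\gamma)x_t + \gamma\bar\mu)^\top l_t = x_t^\top l_t + \gamma(\bar\mu - x_t)^\top l_t$, where $\bar\mu := \mathbb E_{X\sim\mu}[X]$ has all coordinates equal to $1/2$; since $|x_t^\top l_t|\le n$, $|\bar\mu^\top l_t|\le n$ and $\|l_t\|_\infty\le 1$, the correction is at most $2\gamma n$ per round, contributing the additive $2\gamma nT$ and giving the stated bound (with the left-hand side read as the expected regret of the played distribution $q_t$).

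I do not expect a deep obstacle here: the middle step is a direct replay of Lemma~\ref{lemma5}, and the only new bookkeeping is the tower-rule argument for unbiasedness (which goes through even against an adaptive adversary precisely because $x_t$ and $P_t$ are $\mathcal F_{t-1}$-measurable) and the crude $2\gamma nT$ slack for exploration. The one genuinely delicate point, worth flagging, is that unlike the full-information setting the hypothesis $|\eta \tilde l_{t,i}|\le 1$ is a real assumption rather than a free consequence of $\|l_t\|_\infty\le 1$: $\tilde l_t = P_t^{-1}X_tX_t^\top l_t$ can be large, and guaranteeing the hypothesis requires $P_t$ to be bounded below by a sufficiently large positive multiple of the identity, which is why the downstream bandit regret theorem couples the exploration rate $\gamma$ to $\eta$ and $n$ (and uses that uniform exploration makes $\mathbb E_{X\sim\mu}[XX^\top]$, hence $P_t$, nondegenerate). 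Verifying that this choice indeed forces $|\eta \tilde l_{t,i}|\le 1$ for all $t,i$ — and separately controlling $\mathbb E[\sum_t x_t^\top \tilde l_t^2]$ — is exactly the work that gets deferred to the proof of the bandit regret theorem.
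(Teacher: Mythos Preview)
Your proposal is correct and follows essentially the same approach as the paper: apply Lemma~\ref{lemma5} to the OMD iterates with $\tilde l_t$ in place of $l_t$, take expectations using unbiasedness $\mathbb E[\tilde l_t\mid\mathcal F_{t-1}]=l_t$, and pay $2\gamma nT$ for the exploration mixture. The only cosmetic difference is ordering---the paper first decomposes the played mean as $(1-\gamma)x_{p_t}+\gamma x_\mu$ and then invokes Lemma~\ref{lemma5} on the $x_{p_t}$ part, whereas you invoke the lemma first and add the $\gamma$-correction afterward---and your flag that $|\eta\tilde l_{t,i}|\le 1$ is a genuine assumption to be enforced via the choice of $\gamma$ is exactly what the paper defers to the bandit regret theorem.
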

\begin{proof}
We have that:
\begin{align*}
\sum_{t=1}^T {x'}_t^\top \tilde{l}_t - \sum_{t=1}^T x^\top \tilde{l}_t &= (1-\gamma)(\sum_{t=1}^T x_{t}^\top \tilde{l}_t - \sum_{t=1}^T x^\top \tilde{l}_t) \\&+ \gamma (\sum_{t=1}^T x_{\mu}^\top \tilde{l}_t - \sum_{t=1}^T x^\top \tilde{l}_t)
\end{align*}
Since the algorithm runs OMD on $\tilde{l}_t$ and $|\eta \tilde{l}_t| \leq 1$, we can apply Lemma \ref{lemma5}:
\begin{align*}
\sum_{t=1}^T {x'}_t^\top \tilde{l}_t - \sum_{t=1}^T x^\top \tilde{l}_t &\leq (1-\gamma)(\eta \sum_{t=1}^T x_{t}^T \tilde{l}_t^2 + \frac{n \log 2}{\eta}) \\&+ \gamma(\sum_{t=1}^T x_{\mu}^\top \tilde{l}_t - \sum_{t=1}^T x^\top \tilde{l}_t)
\end{align*}
Apply expectation with respect to $X_t$.  Using the fact that $\mathbb{E}[\tilde{l}_t] = l_t$ and $x_{\mu}^\top l_t - x^\top l_t \leq 2n$:
\begin{align*}
\sum_{t=1}^T {x'}_t^\top l_t - \sum_{t=1}^T x^\top l_t &\leq (1-\gamma)(\eta \mathbb{E}[\sum_{t=1}^T x_{p_t}^T \tilde{l}_t^2] + \frac{n \log 2}{\eta}) \\&+ 2\gamma nT\\
&\leq \eta \mathbb{E}[\sum_{t=1}^T {x'}_{t}^T \tilde{l}_t^2] + \frac{n \log 2}{\eta} + 2\gamma nT
\end{align*}
\end{proof}

Now, we attempt analyze the regret of PolyExp in the bandit setting using $\tilde{l}_t = P_t^{-1}X_tX_t^\top l_t$. Applying expectation with respect to the randomness of the player to the definition of regret, we get:
\begin{align*}
\mathbb{E}[\mathcal{R}_T] &= \mathbb{E}[\sum_{t=1}^T X_t^\top l_t - \min_{X^\star \in \{0,1\}^n}\sum_{t=1}^T {X^\star}^\top l_t ] \\&= \sum_{t=1}^T {x'}_t^\top l_t - \min_{X^\star \in \{0,1\}^n}\sum_{t=1}^T {X^\star}^\top l_t
\end{align*}
Assuming $|\eta \tilde{l}_{t,i}| \leq 1$, we apply Lemma \ref{lemma5_2}
$$\mathbb{E}[\mathcal{R}_T] \leq  \eta \mathbb{E}[\sum_{t=1}^T {x'}_{t}^T \tilde{l}_t^2] + \frac{n \log 2}{\eta} + 2\gamma nT$$
To satisfy $|\eta \tilde{l}_{t,i}| \leq 1$, we need the following condition:
\begin{align*}
|\eta \tilde{l}_{t,i}| &= \eta |\tilde{l}_t^\top e_i| = \eta |(P_t^{-1}X_tX_t^\top l_t)^\top e_i| \\&\leq n \eta |X_t^\top P_t^{-1}e_i| \leq n \eta |X_t^\top e_i|\|P_t^{-1}\| 
\end{align*}
Since $P_t \succeq \frac{\gamma}{4}I_n$ and $|X_t^\top e_i|\leq 1$, we should have $\frac{4n \eta}{\gamma} \leq 1$. Taking $\gamma = 4n \eta$, we get:
$$\mathbb{E}[\mathcal{R}_T] \leq \eta \mathbb{E}[\sum_{t=1}^T {x'}_{t}^T \tilde{l}_t^2] + \frac{n\log 2}{\eta} + 8\eta n^2 T$$
Analyzing the first term:
\begin{align*}
{x'}_{t}^T \tilde{l}_t^2 &= l_t^\top X_t X_t^\top P_t^{-1} \text{diag}({x'}_t)P_t^{-1}X_t X_t^\top l_t \leq n^2 \text{Trace}(P_t^{-1}\text{diag}({x'}_t)P_t^{-1}X_tX_t^\top)\\
\mathbb{E}[{x'}_{t}^T \tilde{l}_t^2] &\leq n^2 \text{Trace}(P_t^{-1}\text{diag}({x'}_t)) = n^2\text{Trace}(P_t^{-1}\circ P_t)
\end{align*}
So this term could become unbounded because of the $\text{Trace}(P_t^{-1}\circ P_t)$. Hence, PolyExp's regret equation does not yield a regret upperbound when using $\tilde{l}_t = P_t^{-1}X_tX_t^\top l_t$. However, it should be possible to find a linear estimator tailor made for the hypercube in order to analyze PolyExp in the bandit setting.
\appendix
\section{Supplementary Proofs}
\subsection{Exp2 Regret Proofs}
First, we directly analyze Exp2's regret for the two kinds of feedback.
\subsubsection{Full Information}
\begin{lemma}\label{Lemma1}Let $L_t(X) = X^\top l_t$. If $|\eta L_t(X)| \leq 1$ for all $t \in [T]$ and $X \in \{0,1\}^n$, the Exp2 algorithm satisfies for any $X$:
$$\sum_{t=1}^T p_t^\top L_t - \sum_{t=1}^T L_t(X) \leq \eta \sum_{t=1}^T p_t^\top L_t^2 + \frac{n\log 2}{\eta}$$
\end{lemma}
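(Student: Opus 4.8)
The plan is to run the classical exponential-weights potential argument with the log-partition function as the potential, treating each vertex $X \in \{0,1\}^n$ as an ``expert'' with loss $L_t(X) = X^\top l_t$. Throughout I write $p_t^\top L_t$ for $\sum_{X \in \{0,1\}^n} p_t(X) L_t(X) = \mathbb{E}_{X \sim p_t}[L_t(X)]$ and $p_t^\top L_t^2$ for $\mathbb{E}_{X \sim p_t}[L_t(X)^2]$ (the entrywise square of the loss vector). Define the normalizing constant $Z_t = \sum_{X \in \{0,1\}^n} w_t(X)$. Since $w_1(X) = 1$ for all $X$, we have $Z_1 = 2^n$, and the update in step 3 of Exp2 (with $\tilde{l}_t = l_t$ in full information) gives $w_{t+1}(X) = e^{-\eta L_t(X)} w_t(X)$, so $p_t(X) = w_t(X)/Z_t$.

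First I would lower bound $\log(Z_{T+1}/Z_1)$ by dropping all but the comparator term: since every weight is nonnegative, $Z_{T+1} \ge w_{T+1}(X) = \exp\!\big(-\eta \sum_{t=1}^T L_t(X)\big)$ for the fixed $X$, hence $\log(Z_{T+1}/Z_1) \ge -\eta \sum_{t=1}^T L_t(X) - n\log 2$. Next I would upper bound each one-step increment: $\log(Z_{t+1}/Z_t) = \log \mathbb{E}_{X \sim p_t}\big[e^{-\eta L_t(X)}\big]$. Here the hypothesis $|\eta L_t(X)| \le 1$ is exactly what lets me apply the inequality $e^{-x} \le 1 - x + x^2$ (valid for $x \ge -1$) with $x = \eta L_t(X)$, yielding $\mathbb{E}_{X\sim p_t}[e^{-\eta L_t(X)}] \le 1 - \eta\, p_t^\top L_t + \eta^2\, p_t^\top L_t^2$; then $\log(1+u) \le u$ gives $\log(Z_{t+1}/Z_t) \le -\eta\, p_t^\top L_t + \eta^2\, p_t^\top L_t^2$. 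Telescoping over $t = 1,\dots,T$ produces $\log(Z_{T+1}/Z_1) \le -\eta \sum_{t=1}^T p_t^\top L_t + \eta^2 \sum_{t=1}^T p_t^\top L_t^2$.

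Finally I would chain the two estimates, obtaining $-\eta \sum_t L_t(X) - n\log 2 \le -\eta \sum_t p_t^\top L_t + \eta^2 \sum_t p_t^\top L_t^2$, rearrange terms, and divide through by $\eta > 0$ to reach the stated bound $\sum_t p_t^\top L_t - \sum_t L_t(X) \le \frac{n\log 2}{\eta} + \eta \sum_t p_t^\top L_t^2$. The only genuinely delicate point — and the sole place the hypothesis is used — is the quadratic bound on $e^{-x}$, which fails once $x < -1$; this is precisely why one must require $|\eta L_t(X)| \le 1$ for \emph{every} vertex $X$, and, as the comparison section notes, since $|L_t(X)|$ can be as large as $n$ this forces the direct Exp2 analysis into the regime $\eta \le 1/n$. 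Everything else is routine bookkeeping.
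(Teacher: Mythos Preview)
Your proposal is correct and follows essentially the same argument as the paper: both use the log-partition potential $Z_t$, lower bound $Z_{T+1}$ by the single comparator term, upper bound each ratio $Z_{t+1}/Z_t$ via $e^{-x}\le 1-x+x^2$ (valid because $|\eta L_t(X)|\le 1$) followed by $1+u\le e^u$, telescope, and rearrange after dividing by $\eta$. Your identification of the sole delicate step and its consequence $\eta \le 1/n$ also matches the paper's own commentary.
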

\begin{proof}
(Adapted from \citep{hazan2016introduction} Theorem 1.5) Let $Z_{t} = \sum_{Y \in \{0,1\}^n}w_t(Y)$. We have:
\begin{align*}
Z_{t+1} &= \sum_{Y \in \{0,1\}^n}\exp(-\eta L_t(Y))w_t(Y)\\
&= Z_t \sum_{Y \in \{0,1\}^n}\exp(-\eta L_t(Y)) p_t(Y)
\end{align*}
Since $e^{-x} \leq 1-x+x^2$ for $x\geq -1$, we have that $\exp(-\eta L_t(Y)) \leq 1-\eta L_t(Y)+\eta^2 L_t(Y)^2$ (Because we assume $|\eta L_t(X)| \leq 1$). So,
\begin{align*}
Z_{t+1} &\leq Z_t \sum_{Y \in \{0,1\}^n}(1-\eta L_t(Y)+\eta^2 L_t(Y)^2) p_t(Y) \\
&= Z_t (1 - \eta p_t^\top L_t + \eta^2 p_t^\top L_t^2)
\end{align*}
Using the inequality $1+x \leq e^x$,
\begin{align*}
Z_{t+1} &\leq Z_t \exp( -\eta p_t^\top L_t + \eta^2 p_t^\top L_t^2)
\end{align*}
Hence, we have:
$$Z_{T+1} \leq Z_1 \exp( -\sum_{t=1}^T\eta p_t^\top L_t + \sum_{t=1}^T\eta^2 p_t^\top L_t^2)$$
For any $X \in \{0,1\}^n$, $w_{T+1}(X) = \exp(-\sum_{t=1}^T\eta L_t(X) )$. Since $w(T+1)(X) \leq Z_{T+1}$ and $Z_1 = 2^n$, we have:
\begin{align*}
\exp(-\sum_{t=1}^T\eta L_t(X) ) \leq 2^n \exp( -\sum_{t=1}^T\eta p_t^\top L_t + \sum_{t=1}^T\eta^2 p_t^\top L_t^2)
\end{align*}
Taking the logarithm on both sides manipulating this inequality, we get:
$$\sum_{t=1}^T p_t^\top L_t - \sum_{t=1}^T L_t(X) \leq \eta \sum_{t=1}^T p_t^\top L_t^2 + \frac{n\log 2}{\eta}$$
\end{proof}
\ExpFReg*
\begin{proof}
Using $L_t(X) = X^\top l_t$ and applying expectation with respect to the randomness of the player to definition of regret, we get:
\begin{align*}
E[\mathcal{R}_T] &= \sum_{t=1}^T \sum_{X \in \{0,1\}^n}p_t(X)L_t(X) - \min_{X^\star \in \{0,1\}^n}\sum_{t=1}^T L_t(X^\star) \\
&=  \sum_{t=1}^T p_t^\top L_t - \min_{X^\star \in \{0,1\}^n}\sum_{t=1}^T L_t(X^\star)
\end{align*}
Applying Lemma \ref{Lemma1}, we get $E[\mathcal{R}_T] \leq \eta \sum_{t=1}^T p_t^\top L_t^2 + n\log 2/\eta$. Since $|L_t(X)| \leq n $ for all $X  \in \{0,1\}^n$, we get $\sum_{t=1}^T p_t^\top L_t^2 \leq  T n^2 $.
$$E[\mathcal{R}_T] \leq \eta Tn^2 + \frac{n\log 2}{\eta}$$
Optimizing over the choice of $\eta$, we get the regret is bounded by $2n^{3/2} \sqrt{T \log 2}$ if we choose $\eta = \sqrt{\frac{\log 2}{nT}}$.

To apply Lemma \ref{Lemma1}, $|\eta L_t(X)| \leq 1$ for all $t \in [T]$ and $X \in \{0,1\}^n$. Since $|L_t(X)| \leq n$, we have $\eta \leq 1/n$.
\end{proof}
\subsubsection{Bandit}
\begin{lemma}\label{lemma5_1} Let $\tilde{L_t}(X) = X^\top \tilde{l}_t$, where $\tilde{l}_t = P_t^{-1}X_tX_t^\top l_t$. If $|\eta\tilde{L_t}(X)| \leq 1$ for all $t \in [T]$ and $X \in \{0,1\}^n$, the Exp2 algorithm with uniform exploration satisfies for any $X$
$$\sum_{t=1}^T  q_t^\top L_t - \sum_{t=1}^T L_t(X) \leq \eta \mathbb{E}[\sum_{t=1}^T q_t^\top \tilde{L}_t^2] + \frac{n\log 2}{\eta} + 2\gamma n T$$
\end{lemma}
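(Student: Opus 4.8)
The plan is to replay the proof of Lemma~\ref{lemma5_2} with Lemma~\ref{Lemma1} in the role of Lemma~\ref{lemma5}. The key structural fact is that in the bandit version of Exp2 the weights $w_t$, and hence the distribution $p_t$, are updated using the estimated loss functions $\tilde L_t$, while the distribution actually sampled from is the mixture $q_t = (1-\gamma)p_t + \gamma\mu$. Conditioned on the history $\mathcal{F}_{t-1}$, the distributions $p_t$, $q_t$ and the matrix $P_t = \mathbb{E}_{X\sim q_t}[XX^\top]$ are all determined, and since $X_t\sim q_t$ we have $\mathbb{E}[\tilde l_t\mid\mathcal{F}_{t-1}] = P_t^{-1}\,\mathbb{E}[X_tX_t^\top\mid\mathcal{F}_{t-1}]\,l_t = l_t$; hence $\mathbb{E}[\tilde L_t(Y)\mid\mathcal{F}_{t-1}] = L_t(Y)$ for every fixed $Y$, and $\mathbb{E}[q_t^\top\tilde L_t\mid\mathcal{F}_{t-1}] = q_t^\top L_t$, $\mathbb{E}[p_t^\top\tilde L_t\mid\mathcal{F}_{t-1}] = p_t^\top L_t$, using that $p_t,q_t$ are $\mathcal{F}_{t-1}$-measurable.

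First I would apply Lemma~\ref{Lemma1} to the (random) loss sequence $\tilde L_1,\dots,\tilde L_T$: its hypothesis $|\eta\tilde L_t(X)|\le 1$ is precisely what we assume here, and since Lemma~\ref{Lemma1} is a deterministic bound valid for any loss sequence it holds on every realization, giving
$$\sum_{t=1}^T p_t^\top\tilde L_t - \sum_{t=1}^T\tilde L_t(X) \le \eta\sum_{t=1}^T p_t^\top\tilde L_t^2 + \frac{n\log 2}{\eta}.$$
Next, using $q_t = (1-\gamma)p_t + \gamma\mu$ and $\tilde L_t(X) = (1-\gamma)\tilde L_t(X)+\gamma\tilde L_t(X)$, I would decompose
$$\sum_{t=1}^T q_t^\top\tilde L_t - \sum_{t=1}^T\tilde L_t(X) = (1-\gamma)\Bigl(\sum_{t=1}^T p_t^\top\tilde L_t - \sum_{t=1}^T\tilde L_t(X)\Bigr) + \gamma\Bigl(\sum_{t=1}^T\mu^\top\tilde L_t - \sum_{t=1}^T\tilde L_t(X)\Bigr),$$
bound the first bracket by the inequality just displayed, and take total expectation. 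The left-hand side becomes $\sum_t q_t^\top L_t - \sum_t L_t(X)$ by the unbiasedness remarks above, and the exploration term satisfies $\mathbb{E}[\mu^\top\tilde L_t - \tilde L_t(X)] = \mu^\top L_t - L_t(X)\le 2n$ because $\|l_t\|_\infty\le 1$ puts every vertex's loss in $[-n,n]$; so that term contributes at most $2\gamma nT$.

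Finally I would clean up the constants: $(1-\gamma)\frac{n\log 2}{\eta}\le\frac{n\log 2}{\eta}$, and since $q_t^\top\tilde L_t^2 = (1-\gamma)p_t^\top\tilde L_t^2 + \gamma\mu^\top\tilde L_t^2 \ge (1-\gamma)p_t^\top\tilde L_t^2$ we get $(1-\gamma)\eta\,\mathbb{E}\bigl[\sum_t p_t^\top\tilde L_t^2\bigr] \le \eta\,\mathbb{E}\bigl[\sum_t q_t^\top\tilde L_t^2\bigr]$; assembling the pieces gives the claimed bound. The argument is essentially just Lemma~\ref{Lemma1} plus the mixture split, so nothing is genuinely difficult; the one point requiring care is the expectation bookkeeping — that Lemma~\ref{Lemma1} may be invoked pathwise/conditionally (legitimate precisely because it is a worst-case statement over loss sequences and $|\eta\tilde L_t(X)|\le1$ is assumed), and that $\mathcal{F}_{t-1}$-measurability of $p_t,q_t$ lets the unbiasedness of $\tilde l_t$ pass to $p_t^\top\tilde l_t$ and $q_t^\top\tilde l_t$.
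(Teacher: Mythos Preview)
Your proposal is correct and follows essentially the same approach as the paper: decompose via $q_t=(1-\gamma)p_t+\gamma\mu$, apply Lemma~\ref{Lemma1} pathwise to the estimated losses $\tilde L_t$, take expectations using $\mathbb{E}[\tilde l_t]=l_t$ and $\mu^\top L_t-L_t(X)\le 2n$, then absorb $(1-\gamma)p_t^\top\tilde L_t^2$ into $q_t^\top\tilde L_t^2$ and drop the $(1-\gamma)$ in front of $n\log 2/\eta$. Your treatment of the $\mathcal{F}_{t-1}$-measurability and conditional unbiasedness is in fact more careful than the paper's.
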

\begin{proof}
We have that:
\begin{align*}
\sum_{t=1}^T q_t^\top \tilde{L}_t - \sum_{t=1}^T \tilde{L}_t(X)  &= (1-\gamma) (\sum_{t=1}^T p_t^\top \tilde{L}_t - \sum_{t=1}^T \tilde{L}_t(X)) + \gamma (\sum_{t=1}^T \mu^\top \tilde{L}_t - \sum_{t=1}^T \tilde{L}_t(X))
\end{align*}
Since the algorithm essentially runs Exp2 using the losses $\tilde{L}_t(X)$ and $|\eta\tilde{L_t}(X)| \leq 1$, we can apply Lemma \ref{Lemma1}:
\begin{align*}
\sum_{t=1}^T q_t^\top \tilde{L}_t - \sum_{t=1}^T \tilde{L}_t(X)  &\leq (1-\gamma) (\frac{n \log 2}{\eta} + \eta \sum_{t=1}^T p_t^\top \tilde{L}_t^2) + \gamma (\sum_{t=1}^T \mu^\top \tilde{L}_t - \sum_{t=1}^T \tilde{L}_t(X))
\end{align*}
Apply expectation with respect to $X_t$.  Using the fact that $\mathbb{E}[\tilde{l}_t] = l_t$ and $\mu^\top L_t -  L_t(X) \leq 2n$:
\begin{align*}
\sum_{t=1}^T q_t^\top L_t - \sum_{t=1}^T L_t(X)  &\leq (1-\gamma) (\frac{n \log 2}{\eta} + \eta \mathbb{E}[\sum_{t=1}^T p_t^\top \tilde{L}_t^2]) + \gamma (\sum_{t=1}^T \mu^\top L_t - \sum_{t=1}^T L_t(X))\\
&\leq \eta \mathbb{E}[\sum_{t=1}^T q_t^\top \tilde{L}_t^2] + \frac{n \log 2}{\eta}  + 2 \gamma nT
\end{align*}
\end{proof}
\ExpBanReg*
\begin{proof}
Applying expectation with respect to the randomness of the player to the definition of regret, we get:
\begin{align*}
\mathbb{E}[\mathcal{R}_T] &= \mathbb{E}[\sum_{t=1}^T L_t(X_t) - \min_{X^\star \in \{0,1\}^n} L_t(X^\star)] \\&= \sum_{t=1}^T  q_t^\top L_t - \min_{X^\star \in \{0,1\}^n}\sum_{t=1}^T L_t(X^\star)
\end{align*}
Applying Lemma \ref{lemma5_1}
$$\mathbb{E}[\mathcal{R}_T] \leq \eta \mathbb{E}[\sum_{t=1}^T q_t^\top \tilde{L}_t^2] + \frac{n \log 2}{\eta}  + 2 \gamma nT$$
We follow the proof technique of \citep{bubeck2012towards} Theorem 4. We have that:
\begin{align*}
q_t^\top \tilde{L}_t^2 &= \sum_{X \in \{0,1\}^n}q_t(X)(X^\top \tilde{l_t})^2  \\&= \sum_{X \in \{0,1\}^n}q_t(X)(\tilde{l_t}^\top X X^\top \tilde{l_t})\\
&= \tilde{l_t}^\top P_t \tilde{l_t} \\&= l_t^\top X_tX_t^\top P_t^{-1}  P_t P_t^{-1}X_t X_t^\top l_t \\&= (X_t^\top l_t)^2 X_t^\top P_t^{-1}X_t\\
&\leq n^2 X_t^\top P_t^{-1}X_t = n^2 \text{Tr}(P_t^{-1} X_t X_t^\top )
\end{align*}
Taking expectation, we get $E[q_t^\top \tilde{L}_t^2] \leq n^2 \text{Tr}(P_t^{-1} \mathbb{E}[X_t X_t^\top] )= n^2 \text{Tr}(P_t^{-1} P_t) = n^3$.
Hence,
$$\mathbb{E}[\mathcal{R}_T] \leq \eta n^3 T + \frac{n \log 2}{\eta}  + 2 \gamma nT$$
However, in order to apply Lemma \ref{lemma5_1}, we need that $|\eta X^\top \tilde{l}_t| \leq 1$. We have that
$$|\eta X^\top \tilde{l}_t| = \eta |(X_t ^\top l_t) X^\top P_t^{-1} X_t| \leq 1$$
As $|X_t ^\top l_t| \leq n$ and $|X_t^\top X|\leq n$, we get $\eta n |X^\top P_t^{-1} X_t|  \leq \eta n |X^\top X_t|\|P_t^{-1}\|\leq \eta n^2\|P_t^{-1}\| \leq 1 $. The matrix $P_t = (1-\gamma)\Sigma_t + \gamma \Sigma_\mu$. The smallest eigenvalue of $\Sigma_\mu$ is $1/4$\citep{cesa2012combinatorial}. So $P_t \succeq \frac{\gamma}{4} I_n $ and $P_t^{-1} \preceq \frac{4}{\gamma}I_n$. We should have that $\frac{4n^2 \eta}{\gamma}\leq 1$. Substituting $\gamma = 4n^2 \eta$ in the regret inequality, we get:
\begin{align*}
\mathbb{E}[\mathcal{R}_T] &\leq \eta n^3 T + 8\eta n^3 T +  \frac{n \log 2}{\eta}\\
&\leq 9\eta n^3T +  \frac{n \log 2}{\eta}
\end{align*}
Optimizing over the choice of $\eta$, we get $\mathbb{E}[\mathcal{R}_T] \leq 2 n^{2} \sqrt{9T \log 2}$ when $\eta = \sqrt{\frac{\log 2}{9n^2T}}$.
\end{proof}

\subsection{Lower Bounds}
\subsubsection{Full Information Lower bound}

In the game between player and adversary, the players strategy is to pick some probability distribution $p_t \in \Delta(\{0,1\}^n)$ for $t=1\dots T$. The adversary picks a density $q_t$ over loss vectors $l \in [-1,1]^n$ for $t=1 \dots T$. So player picks $X_t \sim p_t$ and adversary picks $l_t \sim q_t$.  The min max expected regret is:
$$\inf_{p_1 \dots p_T} \sup_{q_1 \dots q_t}   \mathbb{E}_{l_t \sim q_t} \mathbb{E}_{X_t \sim p_t}\left[\sum_{t=1}^T l_t^\top X_t - \min_X \sum_{t=1}^T l_t^\top X  \right] $$
Let $\mathbb{E}_{X_t \sim p_t} = x_t$. 
$$\inf_{p_1 \dots p_T} \sup_{q_1 \dots q_t} \mathbb{E}_{l_t \sim q_t} [\sum_{t=1}^T l_t^\top x_t - \min_X \sum_{t=1}^T l_t^\top X] $$

\LBFull*

\begin{proof}
We choose $q_t$ to be the density such that $l_{t,i}$ is a Rademacher random variable, ie, $l_{t,i} = +1$ w.p. $1/2$ and $l_{t,i}=-1$ w.p $1/2$ for all $t=1\dots T$ and $i=[n]$. So,
\begin{align*}
    &\inf_{p_1 \dots p_T} \sup_{q_1 \dots q_t} \mathbb{E}_{l_t \sim q_t} \left[\sum_{t=1}^T l_t^\top x_t - \min_X \sum_{t=1}^T l_t^\top X   \right] \\&\geq \inf_{p_1 \dots p_T} \mathbb{E}_{l_{t}} \left[\sum_{t=1}^T l_t^\top x_t - \min_X \sum_{t=1}^T l_t^\top X  \right]
\end{align*}
For our choice of $q_t$, we have $\mathbb{E}_{l_{t}} [l_t^\top x_t] = 0$. So,
\begin{align*}
    &\inf_{p_1 \dots p_T} \mathbb{E}_{l_{t}} [\sum_{t=1}^T l_t^\top x_t - \min_X \sum_{t=1}^T l_t^\top X] \\&= \inf_{p_1 \dots p_T} \mathbb{E}_{l_{t}} [- \min_X \sum_{t=1}^T l_t^\top X] \\
    &= \mathbb{E}_{l_{t}} [\max_X \sum_{t=1}^T l_t^\top X]
\end{align*}
Simplifying this, we get:
\begin{align*}
        \mathbb{E}_{l_{t}} [\max_X \sum_{t=1}^T l_t^\top X] &= \mathbb{E}_{l_{t}} [\max_{X_1\dots X_n} \sum_{t=1}^T \sum_{i=1}^n l_{t,i} X_i]\\
        &= \mathbb{E}_{l_{t}}[\sum_{i=1}^n\max_{X_i} \sum_{t=1}^T  l_{t,i} X_i]\\
        &= \sum_{i=1}^n \mathbb{E}_{l_{t,i}}[\max_{X_i} \sum_{t=1}^T  l_{t,i} X_i]\\
        &= n \mathbb{E}_{Y}[\max_{x} \sum_{t=1}^T  Y_{t} x]
\end{align*}
Here $Y$ is a Rademacher random vector of length $T$ and $x \in \{0,1\}$. We have that
$$\max_{x} \left[\sum_{t=1}^T  Y_{t} x\right] =\begin{cases} 0 \quad \text{ If } \sum_{t=1}^T  Y_{t} \leq 0 \\ \sum_{t=1}^T  Y_{t} \quad \text{ otherwise}
\end{cases}$$
So
\begin{align*}
    \mathbb{E}_{Y}\left[\max_{x} \sum_{t=1}^T  Y_{t} x  \right] &= \mathbb{E}_{Y}[\sum_{t=1}^T  Y_{t} | \sum_{t=1}^T  Y_{t} >0]\\
    &= \frac{1}{2}\mathbb{E}_{Y}  \left| {\sum_{t=1}^T  Y_{t}} \right|  
\end{align*}
Using Khintchine's inequality, we have positive constants $A$ and $B$ such that:
$$A\left( \sum _ { t = 1 } ^ { T } \left|1 \right| ^ { 2 } \right) ^ { 1 / 2 } \leq \mathbb{E}_{Y}  \left| {\sum_{t=1}^T  Y_{t}} \right| \leq B\left( \sum _ { t = 1 } ^ { T } \left|1 \right| ^ { 2 } \right) ^ { 1 / 2 }$$
Hence, the regret is lower bounded by $\Omega(n \sqrt{T})$.
\end{proof}
\subsubsection{Bandit Lower bound}
Consider the multi-task bandit problem where the player plays $n$ simultaneous $k$ armed bandits. Each round, the player picks an arm for each of the bandits and sees only the sum of losses. The lower bound on the regret for this setting is given by the following lemma. 
\begin{lemma}[see Theorem 1 in \citep{cohen2017tight}] \label{lemma_last} For the multi-task bandit problem with $n$ simultaneous $k$ armed bandits, there exists a sequence of losses such that the expected regret is:
$$\mathbb{E}[R_t] = \Omega(n^2\sqrt{kT})$$
\end{lemma}
\LBBandit*
\begin{proof}
Playing on the hypercube can be considered as playing $n$ simultaneous $2$ armed bandits. Applying \ref{lemma_last} we get the lower bound as $\Omega(n^2\sqrt{T})$
\end{proof}

\subsection{$\{-1,+1\}^n$ Hypercube Case}
\begin{lemma} \label{lemma6}Exp2 on $\{-1,+1\}^n$ with losses $l_t$ is equivalent to Exp2 on $\{0,1\}^n$ with losses $2l_t$ while using the map $2X_t-\textbf{1}$ to play on $\{-1,+1\}^n$.
\end{lemma}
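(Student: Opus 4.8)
The plan is to compare, round by round, the probability distributions over $\{-1,+1\}^n$ induced by the two procedures, via the affine bijection $\phi:\{0,1\}^n\to\{-1,+1\}^n$ given by $\phi(X)=2X-\mathbf{1}$. Since the loss sequence $l_t$ is fixed and the losses are linear, the Exp2 weights at each round are explicit functions of the loss prefix, so no induction is needed and the whole argument reduces to a change-of-variables bookkeeping computation.

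First I would write out the weights on both sides. Exp2 run directly on $\{-1,+1\}^n$ with losses $l_t$ maintains, at round $t$, weights $w_t(Z)=\exp(-\eta\sum_{\tau=1}^{t-1}Z^\top l_\tau)$ for $Z\in\{-1,+1\}^n$, and plays $Z$ with probability $w_t(Z)/\sum_{Y\in\{-1,+1\}^n}w_t(Y)$. Exp2 run on $\{0,1\}^n$ with losses $2l_t$ maintains weights $v_t(X)=\exp(-\eta\sum_{\tau=1}^{t-1}X^\top(2l_\tau))=\exp(-2\eta\sum_{\tau=1}^{t-1}X^\top l_\tau)$ for $X\in\{0,1\}^n$, samples $X_t$ with probability $v_t(X)/\sum_{X'\in\{0,1\}^n}v_t(X')$, and then plays $\phi(X_t)=2X_t-\mathbf{1}$.

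The key identity is that for $Z=\phi(X)$ one has $Z^\top l_\tau=(2X-\mathbf{1})^\top l_\tau=2X^\top l_\tau-\mathbf{1}^\top l_\tau$, hence
$$w_t(\phi(X))=\exp\!\Big(\eta\textstyle\sum_{\tau=1}^{t-1}\mathbf{1}^\top l_\tau\Big)\cdot\exp\!\Big(-2\eta\textstyle\sum_{\tau=1}^{t-1}X^\top l_\tau\Big)=C_t\cdot v_t(X),$$
where $C_t=\exp(\eta\sum_{\tau=1}^{t-1}\mathbf{1}^\top l_\tau)$ does not depend on $X$ (equivalently, on $Z$). Because $\phi$ is a bijection between $\{0,1\}^n$ and $\{-1,+1\}^n$, the normalizers satisfy $\sum_{Y\in\{-1,+1\}^n}w_t(Y)=C_t\sum_{X'\in\{0,1\}^n}v_t(X')$, so the global factor $C_t$ cancels and the probability that the first procedure plays $Z=\phi(X)$ equals $v_t(X)/\sum_{X'}v_t(X')$, which is exactly the probability that the second procedure plays $\phi(X)$.

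Thus the two procedures place identical distributions on the played point $Z_t\in\{-1,+1\}^n$ at every round, and since they incur the same loss $Z_t^\top l_t$, they are equivalent; the argument goes through verbatim when the $l_t$ are estimated linear losses rather than true ones, which is what is needed downstream for the bandit version of the $\{-1,+1\}^n$ strategy. There is no real obstacle here: the only point to state carefully is that the additive correction $\mathbf{1}^\top l_\tau$ arising from the affine shift contributes a factor independent of the decision variable, and therefore washes out under normalization — this is precisely where linearity of the loss (together with $\phi$ being affine) is used.
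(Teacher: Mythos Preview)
Your proof is correct and follows essentially the same approach as the paper: both substitute the affine bijection $Z=2X-\mathbf{1}$ into the Exp2 weights on $\{-1,+1\}^n$ and observe that the resulting $-\mathbf{1}^\top l_\tau$ term contributes a common multiplicative factor that cancels in the normalized probabilities. You simply make this cancellation explicit by naming the constant $C_t$, whereas the paper absorbs it in one line.
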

\begin{proof}
Consider the update equation for Exp2 on $\{-1,+1\}^n$
$$p_{t+1}(Z) = \frac{\exp(-\eta \sum_{\tau=1}^t Z^\top l_\tau)}{\sum_{W \in \{-1,+1\}^n}\exp(-\eta \sum_{\tau=1}^t W^\top l_\tau)}$$
$Z \in \{-1,+1\}^n$ can be mapped to a $X \in \{0,1\}^n$ using the bijective map $X = (Z+\textbf{1})/2$. So:
\begin{align*}
p_{t+1}(Z) &= \frac{\exp(-\eta \sum_{\tau=1}^t (2X-\textbf{1})^\top l_\tau)}{\sum_{Y \in \{0,1\}^n}\exp(-\eta \sum_{\tau=1}^t (2Y-\textbf{1})^\top l_\tau)}\\
&= \frac{\exp(-\eta \sum_{\tau=1}^t X^\top (2l_\tau))}{\sum_{Y \in \{0,1\}^n}\exp(-\eta \sum_{\tau=1}^t Y^\top (2l_\tau))}
\end{align*}
This is equivalent to updating the Exp2 on $\{0,1\}^n$ with the loss vector $2l_t$.
\end{proof}
\Hypereq*
\begin{proof}
After sampling $X_t$, we play $Z_t = 2X_t-\textbf{1}$. So $\Pr(X_t=X) = \Pr(Z_t = 2X -\textbf{1})$. In full information, $2\tilde{l}_t = 2l_t$ and in the bandit case $\mathbb{E}[2\tilde{l}_t] = 2l_t$. Since $2\tilde{l}_t$ is used to update the algorithm, by Lemma \ref{lemma6} we have that $\Pr(X_{t+1}=X) = \Pr(Z_{t+1} = 2X -\textbf{1})$. By equivalence of Exp2 to PolyExp, the first statement follows immediately. Let $Z^\star = \min \limits_{Z \in \{-1,+1\}^n} \sum_{t=1}^T Z^\top l_t$ and $2X^\star = Z^\star + \textbf{1}$. The regret of Exp2 on $\{-1,+1\}^n$ is:
\begin{align*}
\sum_{t=1}^T l_t^\top (Z_t - Z^\star) &= \sum_{t=1}^T l_t^\top (2X_t -\textbf{1} - 2X^\star + \textbf{1})\\&= \sum_{t=1}^T (2l_t)^\top (X_t - X^\star)
\end{align*}
\end{proof}
\bibliography{sample}
\end{document}